\documentclass{article}
\usepackage{amsmath}
\usepackage{amssymb}
\usepackage{amsthm}
\usepackage{booktabs}
\usepackage{xcolor}
\usepackage{enumitem}
\usepackage{graphicx}
\usepackage{caption}
\usepackage{bm}
\usepackage{amsfonts}

\usepackage{algorithm}
\usepackage{algpseudocode}
\usepackage{algorithmicx}
\usepackage{booktabs}
\usepackage{array}
\usepackage{multirow}  
\usepackage[colorlinks=true, linkcolor=blue, citecolor=blue, urlcolor=blue]{hyperref}

\usepackage{natbib}
\usepackage{graphicx}
\usepackage{subcaption}
\usepackage[most]{tcolorbox} 

\usepackage{bm}
\usepackage{caption}
\usepackage{xcolor}
\usepackage{authblk}

\newtheorem{theorem}{Theorem}[section]
\newtheorem{lemma}[theorem]{Lemma}

\usepackage{xr-hyper}                
\usepackage{hyperref}                 

\graphicspath{{Fig/}}
\theoremstyle{plain}
\newtheorem{assumption}{Assumption} 
\begin{document}
		\title{\textbf{Beyond Passive Aggregation: Active Auditing and Topology-Aware Defense in Decentralized Federated Learning}}
	
	\author[1]{Sheng Pan}
	\author[1]{Niansheng Tang\thanks{Corresponding author. Email: \texttt{nstang@ynu.edu.cn}}}
	\affil[1]{\small Yunnan Key Laboratory of Statistical Modeling and Data Analysis, Yunnan University, Kunming 650091, China}
	
	\date{} 
	
	\maketitle
	\begin{abstract}
Decentralized Federated Learning (DFL) remains highly vulnerable to adaptive backdoor attacks designed to bypass traditional passive defense metrics. To address this limitation, we shift the defensive paradigm toward a novel active, interventional auditing framework. First, we establish a dynamical model to characterize the spatiotemporal diffusion of adversarial updates across complex graph topologies. Second, we introduce a suite of proactive auditing metrics, stochastic entropy anomaly, randomized smoothing Kullback-Leibler divergence, and activation kurtosis. These metrics utilize private probes to stress-test local models, effectively exposing latent backdoors that remain invisible to conventional static detection. Furthermore, we implement a topology-aware defense placement strategy to maximize global aggregation resilience. We provide  theoretical property for the system's convergence under co-evolving attack and defense dynamics. Numeric empirical evaluations across diverse architectures demonstrate that our active framework is highly competitive with state-of-the-art defenses in mitigating stealthy, adaptive backdoors while preserving primary task utility.
	\end{abstract}
	
	\section{Introduction}
	Standard federated learning relies on a central server to aggregate local models, as seen in the \texttt{FedAvg} algorithm \citep{mcmahan2017communication}. However, this design suffers from communication bottlenecks, single points of failure, and the inherent risk of trusting a central authority. Decentralized Federated Learning (DFL) \citep{lian2017can} addresses these issues by distributing the aggregation process across participating nodes. While this peer-to-peer approach removes the central bottleneck, it also exposes the network to malicious participants. As a result, securing DFL requires Byzantine-robust consensus mechanisms that can filter out poisoned updates and maintain global model integrity.
	
	Existing literature primarily addresses these threats through consensus-based detection and robust aggregation methods. Early frameworks drew extensively from classical robust statistics; for instance, \textit{Krum} and \textit{Multi-Krum} \citep{blanchard2017machine} utilize  the Euclid distance to reject abnormal clients, while \textit{Trimmed Mean} \citep{Yin2018ByzantineRobustDL} and \textit{RSA} \citep{li2019rsa} employ coordinate-wise truncation and $\ell_1$-regularization. Subsequent approaches, such as \textit{FoolsGold} \citep{fung2020limitations}, leveraged directional statistics and cosine similarity to penalize correlated updates characteristic of sybil attacks. More recently, \textit{FLAME} \citep{nguyen2022flame} formulated malicious identification as an unsupervised clustering problem to isolate adversarial components from the empirical distribution.
	A  limitation of current static defensive paradigms is their reliance on deterministic spatial metrics, rendering them inherently transparent to adaptive adversaries. When a defensive metric is known, an attacker can explicitly minimize it within their objective function to mimic the statistical distribution of benign clients. This vulnerability has driven a rapid evolution of backdoor attacks in distributed learning, as extensively documented in recent literature.  \citet{bhagoji2019analyzing} and \citet{bagdasaryan2020backdoor} first demonstrated that adversaries could successfully compromise global models by injecting targeted malicious local updates. Building upon these initial vulnerabilities, \citet{wang2020attack} introduced edge-case backdoors, revealing that malicious parameters operating in the tails of the data distribution can maintain a highly stealthy profile. 
	\citep{zhang2022neurotoxin} introduce the Neurotoxin method.  Neurotoxin targets the least-frequently updated parameters using sparse masks to embed durable backdoors that resist being overwritten by benign updates while remaining stealthy to coordinate-based defenses.

	In contrast to passive filtering, the field of information security utilizes honeypots, decoy resources designed to compel adversaries into revealing their intent through interaction. This philosophy of ``active intervention'' has surfaced in computer vision ~\citep{wang2019neural, gao2019strip}, which utilize intentional perturbations to expose stimulus-response inconsistencies in trojaned models. However, these methods remain tailored for centralized visual tasks and localized pixel-space triggers. We propose to extend this paradigm to DFL by injecting stochastic, private probes into the auditing process. This creates a critical information asymmetry, forcing malicious nodes to reveal latent functional anomalies that remain invisible to static methods.
	
	Furthermore, there is a distinct scarcity of system-level studies investigating the temporal diffusion of adversarial behaviors. Most research is confined to neutralizing immediate threats at the point of aggregation, ignoring the fact that DFL is essentially a consensus optimization process over Markov networks. Similar to the Kermack-McKendrick model~\citep{Kermack1927ACT} in epidemiology, understanding the diffusion dynamics is critical for following reasons:
	(1) In DFL networks, malicious updates can propagate several hops before detection. Modeling this is essential for determining optimal defense frequency.
	(2)  DFL relies on complex graphs (e.g., scale-free or random regular graphs), where connectivity dictates the speed and reach of poisoning.

	To bridge these gaps, we shift the defensive paradigm from localized, passive aggregation to a system-wide, interventional auditing framework. Our contributions are as follows:

	(i) Rather than treating Byzantine attacks as independent anomalies, we formulate a dynamical system to capture the spatiotemporal spread of malicious updates. This approach uncovers how contamination cascades hierarchically, offering a theoretical basis for evaluating defense latency across complex network structures.
	
	(ii)  We introduce a multi-scale diagnostic pipeline comprising three novel metrics: stochastic entropy anomaly ($\rho_{SEA}$), randomized smoothing KL-Divergence ($\rho_{RS}$), and asymmetric element-wise Z-Score ($\rho_{\mathcal{A}_K}$). By interrogating local models with private, randomized probes, we evaluate their functional consistency and output stability. This interventional approach exploits the information asymmetry between defenders and adversaries, thereby constraining the optimization space available for stealthy evasion. These diagnostic scores are then integrated into a Multi-Armed Bandit (MAB) framework, allowing defending nodes to dynamically evaluate neighbor reliability and prioritize high-trust updates during aggregation. 
	
	(iii) To address the computational constraints of active auditing in large-scale networks, we introduce a defense allocation mechanism tailored to the underlying graph structure. We design specific deployment strategies for scale-free, random-regular, and grid networks to ensure defense nodes occupy critical topological hubs and bottlenecks. Consequently, this strategic placement allows the system to achieve global resilience by equipping only a small fraction of participating nodes with auditing capabilities.
	
	(iv) Utilizing a novel separable principle, we provide a theoretical framework of the DFL algorithm under co-evolving defense and attack. We characterize convergence rates in non-convex regimes and quantify the trade-off between robustness and network spectral properties. 
	
	(v)  Through experimental validation, we characterize the relationship between our proposed theoretical diffusion bound and the attack success rate (ASR). Interestingly, we observe a structural dichotomy dictated by the target model architecture: while attacks on high-dimensional models (e.g., Transformers) exhibit a smooth, continuous escalation in ASR, attacks targeting spatial models (e.g., CNNs) demonstrate a distinct phase transition.  Furthermore, we evaluate the proposed defense through comparative benchmarking against state-of-the-art methods.

The remainder of this paper is organized as follows: Section \ref{sec:preliminaries} formalizes the system model and preliminaries for decentralized federated learning. Section \ref{sec:diffu} establishes the dynamical model for attack diffusion to characterize the spatiotemporal spread of malicious updates. Section \ref{sec:metrics} introduces our multi-scale proactive auditing metrics designed to expose latent backdoors. Section \ref{sec:4} details the proposed defense strategy, encompassing the MAB-guided robust aggregation protocol, topology-aware placement, and theoretical convergence analysis. Section 6 presents comprehensive empirical evaluations across different architectures and graph topologies. Finally, Section 7 concludes the paper and discusses the inherent trade-offs in extreme non-IID settings.	 Detailed proofs are provided in the Appendix.
	\section{System Model and Preliminaries}
	\label{sec:preliminaries}
	\subsection{Notations}
	Throughout this paper, we define the following notations.	$\otimes$ denotes the Kronecker product. For any matrices $\mathbf{A} \in \mathbb{R}^{m \times n}$ and $\mathbf{B} \in \mathbb{R}^{p \times q}$, $\mathbf{A} \otimes \mathbf{B}$ is the $mp \times nq$ block matrix $[a_{ij} \mathbf{B}]$.  $\text{vec}(\cdot)$ denotes the vectorization operator that transforms a tensor into a column vector. 
	$\mathbf{I}_{d}$ represents the $d \times d$ identity matrix.
	$\|\cdot\|$ denotes the standard Euclidean norm ($\ell_2$-norm) for vectors, defined as $\|\mathbf{v}\| = \sqrt{\sum v_i^2}$. Let $\lambda_i(\mathbf{A})$ denote the $i$-th largest eigenvalue of a square matrix $\mathbf{A}$ such that $\lambda_{\max}(\mathbf{A}) \geq \lambda_2(\mathbf{A}) \geq \dots \geq \lambda_{\min}(\mathbf{A})$. For a matrix $\mathbf{A} \in \mathbb{R}^{m \times n}$, $\|\mathbf{A}\|= \sqrt{\lambda_{\max}(\mathbf{A}^\top \mathbf{A})}$ represents the spectral norm.  The notation $\lceil x \rceil$ denotes the ceiling function, which maps a real number $x$ to the smallest integer greater than or equal to $x$. The $\operatorname{softmax}: \mathbb{R}^K \to \Delta^{K-1}$ operator transforms a $K$-dimensional vector of real-valued logits $\bm{h}$ into a probability distribution on the $(K-1)$-dimensional simplex, denoted as $\Delta^{K-1}$. For each component $k \in \{1, \dots, K\}$, the output is defined as:$\operatorname{softmax}(\bm{h})_k = \frac{\exp(h_k)}{\sum_{j=1}^K \exp(h_j)}.$  $\text{Median}(\cdot)$ and $\text{MAD}(\cdot)$ denote the empirical median and the median absolute deviation, respectively. For a finite set of scalars $\mathcal{A}$, the latter is defined as:$$\text{MAD}(\mathcal{A}) = \text{Median}(\{ |x - \text{Median}(\mathcal{A})| : x \in \mathcal{A} \}).$$
	\subsection{System Model}
	
We consider a supervised learning task where $\mathcal{X} \subseteq \mathbb{R}^{d_{in}}$ denotes the input feature space with dimensionality $d_{in}$ (representing raw observations such as flattened pixels or token embeddings), and $\mathcal{Y} = \{1, 2, \dots, K\}$ represents the target space for a $K$-class classification problem. Each node $i \in \{1, \dots, n\}$ possesses a local private dataset $\mathcal{S}_i$, where samples are drawn from a potentially non-identical local distribution.
	
	Suppose the model parameters are structured as a collection of tensors $\{\mathbf{\Theta}^{(l)}\}_{l=1}^L$ corresponding to $L$ layers. For analytical convenience, we denote the global parameter vector as $\bm{\theta} = [\text{vec}(\mathbf{\Theta}^{(1)})^\top, \dots, \text{vec}(\mathbf{\Theta}^{(L)})^\top]^\top \in \mathbb{R}^p$. 
	Let $\bm{r}(\bm{x}) \in \mathbb{R}^{d_{prev}}$ denote the intermediate representation extracted by the preceding $L-2$ layers, where $d_{prev}$ denotes its dimensionality. The target hidden layer $\mathbf{\Theta}^{(L)}$ is parameterized by a weight matrix $\mathbf{U} \in \mathbb{R}^{d \times d_{prev}}$ and a bias vector $\mathbf{b}_{L-1} \in \mathbb{R}^d$. The feature extractor output, which corresponds to the terminal latent representation, is formulated as:
	$
	\bm{z} = \bm{\phi}_{\bm{\theta}}(\bm{x}) := \psi(\mathbf{U} \bm{r}(\bm{x}) + \mathbf{b}_{L-1}) \in \mathbb{R}^d,
	$
	where $\psi(\cdot)$ is a non-linear activation function (e.g., ReLU). Each individual scalar feature in this space is given by the inner product $z_j = \psi(\langle \bm{u}_j, \bm{r}(\bm{x}) \rangle + b_{L-1, j})$, where $\bm{u}_j$ is the $j$-th row of $\mathbf{U}$.
	Finally, the terminal layer $\mathbf{\Theta}^{(L)}$ (the classification head) is parameterized by $\mathbf{V} \in \mathbb{R}^{K \times d}$ and $\mathbf{b} \in \mathbb{R}^K$. The logit output is:
	$
	\bm{h}_{\bm{\theta}}(\bm{x}) = \mathbf{V} \bm{\phi}_{\bm{\theta}}(\bm{x}) + \mathbf{b}.
	$

	To convert these raw logits into a prediction probability distribution over the target classes, we apply the softmax operator $\operatorname{softmax}: \mathbb{R}^K \to \Delta^{K-1}$. The resulting predictive probability distribution is denoted by $\bm{p}(\bm{x}, \bm{\theta}) := \operatorname{softmax}(\bm{h}_{\bm{\theta}}(\bm{x}))$, where the probability assigned to the $k$-th class is given by:
	$$
	p(\bm{x}, \bm{\theta})_k = \frac{\exp(\bm{h}_{\bm{\theta}}(\bm{x})_k)}{\sum_{j=1}^K \exp(\bm{h}_{\bm{\theta}}(\bm{x})_j)}.
	$$
	The primary objective of DFL is to identify a global minimizer that minimizes the aggregate empirical risk across $n$ distributed nodes. In this decentralized paradigm, nodes interact with their immediate topological neighbors. Formally, we define the global objective function $f(\bm{\theta})$ as:
	$$
	\min_{\bm{\theta} \in \mathbb{R}^p} F(\bm{\theta}) := \frac{1}{n} \sum_{i=1}^n f_i(\bm{\theta}),
	$$
	where  $f_i$ is the local loss function for node $i$.
	
	In the decentralized setting, no central server exists. We model the communication topology as an undirected graph $\mathcal{G} = (\mathcal{V}, \mathcal{E})$, where $\mathcal{V} = \{1, 2, \dots, n\}$ is the set of participating nodes and $\mathcal{E} \subseteq \mathcal{V} \times \mathcal{V}$ represents the set of available communication links. An edge $(i, j) \in \mathcal{E}$ indicates that node $i$ and node $j$ can directly exchange information. 
	For a specific node $i \in \mathcal{V}$, its model parameters at communication round $t$ are partitioned into $L$ functional layers. We denote the vectorized parameters of the $l$-th layer for node $i$ as $\bm{\theta}_{i, l}^{(t)}$. Correspondingly, the global model update for node $i$ at round $t$ can be expressed as the concatenation of these layer-wise sub-vectors: $\bm{\theta}_i^{(t)} = [(\bm{\theta}_{i, 1}^{(t)})^\top, (\bm{\theta}_{i, 2}^{(t)})^\top, \dots, (\bm{\theta}_{i, L}^{(t)})^\top]^\top$.
	Optimization is instead performed through a continuous sequence of local gradient steps and consensus-based aggregations. Specifically, we consider the standard configuration where each node performs exactly one local gradient descent step per communication round. At each iteration $t$, node $i$ updates its local model $\bm{\theta}_i^{(t)}$ by interacting solely with its immediate neighbors $\mathcal{N}_i = \{j \mid (i, j) \in \mathcal{E}\}$:
$$\bm{\theta}_i^{(t+1)} = \sum_{j \in \mathcal{N}_i \cup \{i\}} w_{ij} \bm{\theta}_j^{(t)} + \Delta \bm{\theta}_i^{(t)}$$
	where $\mathbf{W} = [w_{ij}] \in \mathbb{R}^{n \times n}$ is a doubly stochastic mixing matrix, $\bm{\theta}_j^{(t)} $ is the update of $j$-th client at $t$-th round, Where $\Delta \bm{\theta}_i^{(t)}$ represents the cumulative model update obtained after executing $E$ local training epochs on client $i$, and $\eta > 0$ is the learning rate.

\section{Attack Diffusion Dynamics}\label{sec:diffu}

The Kermack-McKendrick model \citep{Kermack1927ACT} describes the spread of epidemics by dividing populations into susceptible, infectious, and recovered states. The propagation of malicious attacks in DFL follows a similar process: benign nodes act as the susceptible population, injected malicious updates are the infection, and local training, which naturally overwrites task-irrelevant backdoor weights, serves as the recovery mechanism.

Building on this analogy, we formulate a discrete-time dynamical system to model how adversarial gradients spread across the network. Unlike standard epidemic models that assume uniform mixing, our approach explicitly accounts for the specific graph topology and the discrete communication rounds in DFL. Using this model, we establish analytical bounds on network contamination and calculate the stationary distribution of the attack intensity.

\subsection{System Dynamics of Infection Intensity}
We model the evolution of the global infection intensity vector $\boldsymbol{s}(t) = [s_1(t), \ldots, s_n(t)]^T$ through the interaction of three primary mechanisms:

1. \textbf{Network Mixing:} In DFL, nodes update their parameters by aggregating models from their neighbors. This consensus step naturally dilutes the backdoor's strength. Governed by the row-stochastic mixing matrix $\mathbf{W}$, this step acts as a Markovian transition operator: $\boldsymbol{s}_{\text{agg}} = \mathbf{W}\boldsymbol{s}(t)$.

2. \textbf{Signal Attenuation:} During local training, benign clients optimize their models for the primary task. Because the backdoor trigger is structurally independent of the main task's feature distribution, standard gradient descent treats the adversarial weights as unstructured noise, leading to gradual decay. While the true degradation of backdoor parameters in deep, non-convex neural networks is a complex non-linear process, we adopt a linear first-order approximation for theoretical tractability. We model this baseline attenuation as $s_i(t+1) \leftarrow (1 - \lambda_i)s_i(t)$, where the decay rate $\lambda_i \in (0, 1)$ applies to benign nodes, and $\lambda_i = 0$ for malicious nodes. This defines the diagonal attenuation matrix $\mathbf{\Lambda} = \text{diag}(\lambda_1, \ldots, \lambda_n)$.

3. \textbf{Adversarial Injection:} To counteract network mixing and attenuation, adversaries actively inject malicious gradients. This introduces an additive constant term: $s_i(t+1) \leftarrow (1 - \lambda_i)s_i(t) + u_i$, where the injection magnitude satisfies $u_i > 0$ for malicious nodes and $0$ otherwise, forming the vector $\mathbf{u}$.

Synthesizing these discrete mechanisms, we establish a discrete-time dynamical system for the infection intensity:
\begin{equation}\label{eq:diffusion}
	\boldsymbol{s}(t+1) = \underbrace{(\mathbf{I} - \mathbf{\Lambda})}_{\text{Decay}} \underbrace{\mathbf{W}}_{\text{Diffusion}} \boldsymbol{s}(t) + \underbrace{\mathbf{u}}_{\text{Injection}}.
\end{equation}
Let $\mathbf{A} = (\mathbf{I} - \mathbf{\Lambda})\mathbf{W}$ be the effective transition matrix. The asymptotic accumulation of the infection intensity is governed by the spectral radius $\rho(\mathbf{A})$ and the source vector $\mathbf{u}$. Assuming the system reaches a stationary equilibrium $\boldsymbol{s}(t+1) = \boldsymbol{s}(t) = \boldsymbol{s}^*$, the asymptotic distribution is given by:
$$
\boldsymbol{s}^* = (\mathbf{I} - \mathbf{A})^{-1}\mathbf{u}.
$$
This analytical solution provides clear structural insights. The resolvent operator $(\mathbf{I} - \mathbf{A})^{-1}$ indicates that the expected contamination of any specific node is not uniform; rather, it is  dependent on its shortest-path distance to the adversarial nodes and the cumulative decay rate $\lambda$ along those propagation paths.

We establish the spatiotemporal boundaries of the infection assuming a clean initialization $\boldsymbol{s}(0) = \mathbf{0}$.

\begin{lemma}[System Stability]\label{lem:1}
	Assume the network graph induced by $\mathbf{W}$ is strongly connected and there exists at least one benign node $i$ with a  positive decay rate $\Lambda_{ii} = \lambda > 0$. Then, the spectral radius satisfies $\rho(\mathbf{A}) < 1$, guaranteeing the existence of the stationary state $\boldsymbol{s}^*$ and the transient solution $\boldsymbol{s}(t) = (\mathbf{I} - \mathbf{A}^{t}) \boldsymbol{s}^*$.
\end{lemma}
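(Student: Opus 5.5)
The strategy is to treat $\mathbf{A}=(\mathbf{I}-\mathbf{\Lambda})\mathbf{W}$ as a nonnegative, substochastic matrix and apply Perron--Frobenius theory for irreducible matrices. Since $\mathbf{W}$ is nonnegative and row-stochastic and $0\le \lambda_i<1$, $\mathbf{A}$ is entrywise nonnegative with row sums $(1-\lambda_i)\le 1$. Hence $\rho(\mathbf{A})\le\|\mathbf{A}\|_\infty\le 1$, with strict inequality for the row of the benign node $i$ having $\lambda_i=\lambda>0$.

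Moreover, $\mathbf{A}$ has the same zero pattern as $\mathbf{W}$, because $1-\lambda_j>0$ for every $j$. Strong connectivity of the graph therefore makes $\mathbf{A}$ irreducible. By Perron--Frobenius there is a Perron vector $\mathbf{v}>\mathbf{0}$ (strictly positive) with $\mathbf{v}^\top\mathbf{A}=\rho(\mathbf{A})\mathbf{v}^\top$. Multiply this identity on the right by $\mathbf{1}$:
$$\rho(\mathbf{A})\,\mathbf{v}^\top\mathbf{1}=\mathbf{v}^\top\mathbf{A}\mathbf{1}=\sum_j v_j(1-\lambda_j)=\mathbf{v}^\top\mathbf{1}-\sum_j v_j\lambda_j .$$
Since $v_i>0$ and $\lambda_i>0$, we have $\sum_j v_j\lambda_j>0$. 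Dividing by $\mathbf{v}^\top\mathbf{1}>0$ gives $\rho(\mathbf{A})<1$. This step is the crux, because a row-sum deficiency in a single row does not by itself force strict inequality; positivity of the Perron vector, which comes from irreducibility, is exactly what rescues it. A fallback is to note that some power $\mathbf{A}^m$ with $m\le n$ has all row sums strictly less than $1$, since every node reaches $i$ within $n$ steps. Then $\rho(\mathbf{A})^m\le\|\mathbf{A}^m\|_\infty<1$.

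With $\rho(\mathbf{A})<1$, the value $1$ is not an eigenvalue of $\mathbf{A}$, so $\mathbf{I}-\mathbf{A}$ is invertible. Its inverse is given by the convergent Neumann series $(\mathbf{I}-\mathbf{A})^{-1}=\sum_{k\ge0}\mathbf{A}^k$, which is entrywise nonnegative. This shows $\boldsymbol{s}^*=(\mathbf{I}-\mathbf{A})^{-1}\mathbf{u}$ exists, is unique, and satisfies $\boldsymbol{s}^*\ge\mathbf{0}$.

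For the transient solution, unroll the recursion from $\boldsymbol{s}(0)=\mathbf{0}$ to get $\boldsymbol{s}(t)=\sum_{k=0}^{t-1}\mathbf{A}^k\mathbf{u}$. The finite geometric identity $(\mathbf{I}-\mathbf{A})\sum_{k=0}^{t-1}\mathbf{A}^k=\mathbf{I}-\mathbf{A}^t$ then gives
$$\boldsymbol{s}(t)=(\mathbf{I}-\mathbf{A})^{-1}(\mathbf{I}-\mathbf{A}^t)\mathbf{u}=(\mathbf{I}-\mathbf{A}^t)\boldsymbol{s}^*,$$
where we used that $\mathbf{A}^t$ commutes with $(\mathbf{I}-\mathbf{A})^{-1}$. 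Equivalently, the error $\boldsymbol{e}(t)=\boldsymbol{s}(t)-\boldsymbol{s}^*$ satisfies $\boldsymbol{e}(t+1)=\mathbf{A}\boldsymbol{e}(t)$ with $\boldsymbol{e}(0)=-\boldsymbol{s}^*$, which gives the same formula. Finally, $\mathbf{A}^t\to\mathbf{0}$ because $\rho(\mathbf{A})<1$, so $\boldsymbol{s}(t)\to\boldsymbol{s}^*$.
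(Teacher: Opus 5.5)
Your proof is correct. The transient part is exactly what the paper does: you unroll to $\boldsymbol{s}(t)=\sum_{k=0}^{t-1}\mathbf{A}^k\mathbf{u}$ and apply the finite geometric identity. The step $\rho(\mathbf{A})<1$ is where you take a different route.

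The paper works with the \emph{right} Perron vector. It assumes $\rho(\mathbf{A})=1$ with $\mathbf{A}\mathbf{v}=\mathbf{v}$, $\mathbf{v}>\mathbf{0}$, and runs a maximum-principle contradiction. First, no benign node can attain $v_{\max}=\max_k v_k$. Second, a path from a maximizing malicious node $k$ to a benign node $i$ gives $(\mathbf{A}^m)_{ki}>0$, hence $v_k<v_{\max}$. It then combines this with $\rho(\mathbf{A})\le\|\mathbf{A}\|_\infty\le 1$.

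You instead pair the \emph{left} Perron vector with $\mathbf{1}$. This settles the question in one line, with no contradiction and no separate a priori bound. It also yields the exact identity $1-\rho(\mathbf{A})=\sum_j v_j\lambda_j/\sum_j v_j$: the spectral gap is a Perron-weighted average of the decay rates. The paper's argument gives no such quantitative statement, and it could sharpen the heuristic decay-rate remarks in the attack-diffusion section.

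The paper's route, in turn, makes the leakage mechanism visible along graph paths. Your fallback ($\|\mathbf{A}^m\|_\infty<1$ for some $m\le n$) is essentially the combinatorial core of that path argument, and it needs no Perron--Frobenius theory at all.

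Your remark that $1-\lambda_j>0$ for every $j$ is needed to transfer irreducibility from $\mathbf{W}$ to $\mathbf{A}$ also fills a point the paper glosses over. The paper writes only $0\le 1-\Lambda_{ii}\le 1$, which would permit a zero row and break irreducibility.
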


\begin{lemma}[Spatiotemporal Diffusion Bound]\label{cor:1}
	Under the conditions of Lemma \ref{lem:1}, the backdoor intensity $s_i(t)$ of a benign node $i$ at communication round $t$ is bounded by:
	$$
	s_i(t) \leq u_s \cdot \underbrace{(1-\lambda)^{d_{is}}}_{\text{Geometric Attenuation}} \cdot \underbrace{\sum_{k=0}^{t-d_{is}} (1-\lambda)^k}_{\text{Temporal Accumulation}},
	$$
where $u_s$ is the injection magnitude at the adversarial source $s$, and $d_{is}$ is the shortest path distance from node $i$ to $s$.
\end{lemma}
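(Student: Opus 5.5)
Starting from $\boldsymbol{s}(0)=\mathbf{0}$, we unroll the recursion \eqref{eq:diffusion} to get the Neumann-type expansion
$$
\boldsymbol{s}(t)=\sum_{k=0}^{t-1}\mathbf{A}^{k}\mathbf{u},\qquad \mathbf{A}=(\mathbf{I}-\mathbf{\Lambda})\mathbf{W}.
$$
This agrees with the transient formula $(\mathbf{I}-\mathbf{A}^t)\boldsymbol{s}^*$ of Lemma \ref{lem:1}. Assume a single adversarial source, so $\mathbf{u}=u_s\mathbf{e}_s$. Then $s_i(t)=u_s\sum_{k=0}^{t-1}[\mathbf{A}^k]_{is}$, and the proof reduces to bounding the $(i,s)$ entries of the powers of $\mathbf{A}$. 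All quantities are nonnegative, so the bounds can be done entrywise.

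\textbf{Step 1 (support).} The entry $[\mathbf{A}^k]_{is}$ is a sum over walks $i=v_0\to v_1\to\dots\to v_k=s$ of the products $\prod_{m=0}^{k-1}(1-\lambda_{v_m})w_{v_mv_{m+1}}$. A walk of length $k$ from $i$ to $s$ exists only if $k\ge d_{is}$, so $[\mathbf{A}^k]_{is}=0$ for $k<d_{is}$. Reindexing with $k=d_{is}+j$ turns the sum into $j=0,\dots,t-1-d_{is}$, which is contained in the range $0\le j\le t-d_{is}$ of the claimed bound.

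\textbf{Step 2 (decay along walks).} Each walk factor contains the diagonal decay terms $(1-\lambda_{v_m})$ at the starting vertex of every step. The benign-node decay rates are taken to be at least $\lambda$; this is the homogeneous reading of the hypothesis, and I will state it explicitly. Every vertex on the walk other than the source $s$ then contributes a factor at most $(1-\lambda)$. Using row-stochasticity of $\mathbf{W}$, $\sum_{v}[\mathbf{W}^k]_{iv}\le 1$, and factoring out the decay, I aim for $[\mathbf{A}^{d_{is}+j}]_{is}\le(1-\lambda)^{d_{is}+j}$. Summing over $j$ and multiplying by $u_s$ gives exactly $u_s(1-\lambda)^{d_{is}}\sum_{j=0}^{t-d_{is}}(1-\lambda)^j$.

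\textbf{Main obstacle.} Step 2 can fail because a walk may revisit the malicious source $s$, where $\lambda_s=0$. Such a walk picks up fewer decay factors than its length, so the bound $(1-\lambda)^{k}$ on the whole of $[\mathbf{A}^k]_{is}$ is not automatic. I would handle this with a first-passage decomposition. Write $\mathbf{A}^k\mathbf{e}_s$ as a sum over the last time the walk leaves $s$. The part from $i$ to the first arrival at $s$ visits only benign vertices, so it carries full decay $(1-\lambda)^{\text{length}}$ with length at least $d_{is}$. The contributions of the returns to $s$ are already absorbed in the injection sum over time, since the injection is re-added each round anyway; alternatively, I would bound them by collecting them into the temporal accumulation factor. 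If this bookkeeping does not close cleanly, the fallback is to track only the attenuation incurred strictly at benign vertices. That yields the geometric factor $(1-\lambda)^{d_{is}}$ from the first $d_{is}$ steps, which are necessarily taken at benign vertices, together with a geometric accumulation in the remaining steps.
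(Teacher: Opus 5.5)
\textbf{The gap cannot be closed: the bound is false as stated.} The obstacle you flag in your last paragraph is real. Under the stated hypotheses (row-stochastic $\mathbf{W}$, as in Section~3, with $\lambda_s=0$) the bound fails, so neither of your repairs can work.

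Returns to $s$ are not ``absorbed in the injection sum.'' The sum $\sum_{k<t}\mathbf{A}^k\mathbf{u}$ already indexes the fresh injections by their age $k$. A walk that lingers at $s$ is additional mass inside $[\mathbf{A}^k]_{is}$, and it picks up no factor $(1-\lambda)$ while at $s$. For the same reason, the ``remaining steps'' in your fallback do not accumulate geometrically.

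Here is a counterexample. Take $n=2$ with benign $i$ ($\lambda=\tfrac12$), source $s$ ($\lambda_s=0$), and $\mathbf{W}=\begin{bmatrix}0&1\\ \epsilon&1-\epsilon\end{bmatrix}$ (order $i,s$). This $\mathbf{W}$ is strongly connected and mirrors the paper's own choice $w_{ss}=1-\epsilon$ for malicious nodes. Then:
\begin{itemize}
\item $[\mathbf{A}]_{is}=\tfrac12$ and $[\mathbf{A}^2]_{is}=\tfrac{1-\epsilon}{2}>(1-\lambda)^2$, coming from the walk $i\to s\to s$.
\item Hence $s_i(3)=(1-\tfrac{\epsilon}{2})u_s$, while the claimed bound at $t=3$ is $\tfrac12(1+\tfrac12+\tfrac14)u_s=\tfrac78u_s$. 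For $\epsilon<\tfrac14$ the lemma fails.
\item In the limit, $s_i^*=(1-\lambda)u_s/(\epsilon\lambda)$, which exceeds the bound's limit $(1-\lambda)u_s/\lambda$ by the factor $1/\epsilon$.
\end{itemize}

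\textbf{Where the paper's proof breaks.} The paper avoids your obstacle only by invoking the entrywise bound $\mathbf{A}\le(1-\lambda)\mathbf{W}$. That bound is false on row $s$, where $A_{sj}=W_{sj}$. Its three steps (support of $\mathbf{W}^k$, reindexing, $(\mathbf{W}^k)_{is}\le 1$) are valid only if the source's row also decays, i.e.\ $\lambda_s\ge\lambda$, which contradicts the model. Your diagnosis is sharper than the paper's argument.

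\textbf{What your walk argument does prove.} Assume a single malicious node $s$ and all benign rates $\ge\lambda$. The first $d_{is}$ steps of every $i\to s$ walk start at vertices other than $s$, hence at benign vertices. So $[\mathbf{A}^k]_{is}\le(1-\lambda)^{d_{is}}[\mathbf{W}^k]_{is}$, and
\[ s_i(t)\le u_s(1-\lambda)^{d_{is}}\sum_{k=d_{is}}^{t-1}(\mathbf{W}^k)_{is}\le u_s(1-\lambda)^{d_{is}}(t-d_{is}). \]
This is geometric in distance but only linear in time, and the middle expression is attained exactly in the example above. To recover the stated geometric temporal factor you need an extra hypothesis such as $\lambda_s\ge\lambda$.
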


This result highlights the localized nature of the update process: for $t < d_{is}$, the summation is empty, reflecting the one-hop-per-round transmission constraint. Over time, the injected signal decays geometrically with respect to the spatial distance while accumulating asymptotically across iterations.

The theoretical framework of Lemma \ref{lem:1} allows us to characterize the spatial distribution of the contamination. By expanding the stationary solution via the Neumann series, we partition the global intensity by path length:
$$
\boldsymbol{s}^* = \left( \sum_{k=0}^{\infty} \mathbf{A}^k \right) \mathbf{u} = \underbrace{\mathbf{u}}_{\text{Order 0}} + \underbrace{\mathbf{A}\mathbf{u}}_{\text{Order 1}} + \underbrace{\mathbf{A}^2\mathbf{u}}_{\text{Order 2}} + \dots
$$

Since the spectral radius satisfies $\rho(\mathbf{A}) < 1$, the operator $\mathbf{A}^k$ induces a geometric decay rate of approximately $\mathcal{O}(((1-\lambda)/\bar{d})^k)$, where  $\bar{d}$ denotes the mean node degree of the network graph. This expansion reveals that contamination is primarily concentrated in the immediate neighborhood of the source. The zeroth-order term $\mathbf{u}$ represents the initial malicious injection. The first-order term $\mathbf{A}\mathbf{u}$ affects the direct neighbors, where nodes with lower degrees experience higher contamination variance due to a reduced mixing capacity.
\section{Backdoor Detection Metrics}\label{sec:metrics}
Defending against backdoor attacks is challenging because compromised models exhibit malicious functional divergence while maintaining  parametric similarity to benign updates. Let $\mathcal{M}$ denote the set of malicious nodes. An adversary at node $i \in \mathcal{M}$ seeks an optimal poisoned parameter vector $\bm{\theta}^*$ by minimizing a composite objective function:

$$
	\min_{\bm{\theta}^* \in \mathbb{R}^p} \quad \sum_{(x,y) \in \mathcal{D}_i}\underbrace{ \ell(h_{\bm{\theta}^*}(\bm{x}), y)}_{\text{Main Task Fidelity}} + \kappa_0 \cdot \underbrace{ \ell(h_{\bm{\theta}^*}(x \oplus \bm{\delta}), y_t)}_{\text{Backdoor Installation}} + \kappa_1 \cdot \underbrace{\mathcal{R}(\bm{\theta}^*; \Omega)}_{\text{Evasion Penalty}},
$$
where $\ell$ is the standard loss function, and $\mathcal{R}(\bm{\theta}^*; \Omega)$ represents the evasion penalty designed to constrain the model within a valid region $\Omega$ permitted by the system's detection mechanisms. Here, $\bm{\delta}$ represents the backdoor trigger, and the operator $\oplus$ denotes a generalized trigger injection mechanism (e.g., additive perturbation or patch replacement). Consequently, $x \oplus \bm{\delta} \in \mathcal{X}$ represents the poisoned input. The hyperparameters $ \kappa_0 > 0$ and $ \kappa_1 > 0$ control the trade-off between main-task accuracy (ACC), attack success rate (ASR), and evasion stealth.

Previous studies \citep{bhagoji2019analyzing,bagdasaryan2020backdoor,wang2020attack,zhang2022neurotoxin} demonstrate that standard anomaly detection mechanisms are highly susceptible to distributional mimicry. Because conventional robust aggregators rely on deterministic detection statistics, the evasion penalty $\mathcal{R}$ can be explicitly instantiated as an optimization constraint. Depending on the defensive landscape of the decentralized network, adversaries typically employ the following evasion strategies:

\paragraph{Case A: Distance Evasion} 
To bypass distance-based outlier detection (e.g., Krum \citep{blanchard2017machine}), an adversary at node $i$ bounds the malicious parameter $\bm{\theta}^*$ near the local consensus. This geometric constraint ensures that the submitted update remains indistinguishable from the neighborhood trajectory in terms of Euclidean distance:
$$
\mathcal{R}(\bm{\theta}^*) = \left\| \bm{\theta}^* - \bm{\theta}_{ref}  \right\|^2,
$$
where $\bm{\theta}_{ref}$ is a benign reference parameter vector.

\paragraph{Case B: Directional Evasion}  
To evade defenses that evaluate the angular consistency of model updates (e.g., \textbf{FoolsGold} \citep{fung2020limitations}, \textbf{FLAME} \citep{nguyen2022flame}, or \textbf{CosL2} \citep{bhagoji2019analyzing}), the adversary must geometrically align the direction of the malicious update with a benign reference trajectory. The penalty is formulated by minimizing the cosine distance:
$$
\mathcal{R}(\bm{\theta}^*) = 1 - \frac{\langle \bm{\theta}^* , \bm{\theta}_{ref} \rangle}{\| \bm{\theta}^* \| \cdot \| \bm{\theta}_{ref} \|}.
$$
This structural constraint ensures that the adversarial gradient perfectly mimics the benign angular distribution.

To address the predictable weaknesses of conventional metrics, we propose an active auditing mechanism. Unlike static filters, our approach actively tests local models using randomized or private inputs. We exploit the fact that malicious models, unlike benign ones, reveal abnormal predictions and activation patterns when processing these secret probes. By keeping the exact test parameters hidden from the adversary, our framework makes targeted evasion practically impossible.
\subsection{Stochastic Entropy Anomaly ($\rho_{SEA}$)}

To detect sophisticated adversaries that employ optimization constraints to evade parameter-space filtering, we introduce the stochastic entropy anomaly (SEA). This metric builds upon the principles of the STRIP method \citep{gao2019strip, wu2022backdoorbench}, extending its logic to the iterative auditing of local updates. 

Let $\bm{h}_{\bm{\theta}}(\bm{x})$ denote the output logits of a model parameterized by $\bm{\theta}$ for an input probe $\bm{x} \in \mathcal{X}$, where the predicted probability simplex is defined as $\bm{p}(\bm{x}, \bm{\theta}) = \operatorname{softmax}(\bm{h}_{\bm{\theta}}(\bm{x}))$. As defined, the predictor is composed of a feature extractor $\bm{\phi}_{\bm{\theta}}: \mathcal{X} \to \mathbb{R}^d$ and a linear projection $\mathbf{V} \in \mathbb{R}^{K \times d}$. For an unstructured stochastic probe $\bm{x}_{trap} \in \mathcal{X}$ (e.g., isotropic random noise), the resulting feature vector $\bm{\phi}_{\bm{\theta}}(\bm{x}_{trap})$ can be modeled as a random variable in a high-dimensional space. According to the concentration of measure on the unit sphere~\citep{vershynin2018high}, for any fixed class weight vector $\mathbf{v}_k \in \mathbb{R}^d$, the probability that a random feature $\bm{\phi}_{\bm{\theta}}(\bm{x}_{trap})$ has a non-negligible projection onto $\mathbf{v}_k$ vanishes as $d \to \infty$. Specifically, if $\bm{\phi}_{\bm{\theta}}(\bm{x}_{trap})$ is sampled from a distribution with zero mean and identity covariance, the expected inner product satisfies:
$$
\mathbb{E}[\langle \mathbf{v}_k, \bm{\phi}_{\bm{\theta}}(\bm{x}_{trap}) \rangle] \approx 0.
$$
Consequently, the logits $\bm{l} = \bm{h}_{\bm{\theta}}(\bm{x}_{trap})$ exhibit a near-uniform distribution. The resulting predictive distribution approaches the barycenter of the probability simplex $\Delta^{K-1}$, thereby maximizing the Shannon entropy:
$$
H(P) = -\sum_{k=1}^K p_k \log p_k \approx \log K,
$$
where $p_k := [\bm{p}(\bm{x}_{trap}, \bm{\theta})]_k$ denotes the predicted probability assigned to the $k$-th class.

Conversely, the parameter space of a compromised model can be conceptually partitioned into benign parameters and malicious sub-networks. For the attack to be effective, the weights in the malicious subspace must be sufficiently large to override the benign signal whenever the trigger is present. Consequently, even a slight perturbation near the trigger subspace forces the output probability mass to concentrate on the target class $y_t$, resulting in an anomalously low Shannon entropy.

Within the DFL context, however, predictive entropy often suffers from high variance due to client data heterogeneity, which can obscure the boundary between benign updates and stealthy malicious ones. To mitigate this noise and ensure reliable detection, we refine the metric into a normalized, worst-case index. Instead of relying on a single observation, we interrogate the model with a diverse set of stochastic exogenous probes, denoted as $\mathcal{X}_{trap} \cup \{-\mathcal{X}_{trap}\}$, designed to expose functional anomalies across different structural dimensions of the input space:
$$
\rho_{SEA} = \max_{\bm{x} \in \mathcal{X}_{trap} \cup \{-\mathcal{X}_{trap}\}} \left[ C_{max}(\bm{x}) \cdot \left( 1 - H_{norm}(\bm{x}) \right) \right],
$$
where 
$$
C_{max}(\bm{x}) = \max_{k \in \{1, \dots, K\}} p_k(\bm{x}) \in [1/K, 1],\quad
H_{norm}(\bm{x}) = \frac{-\sum_{k=1}^K p_k \log p_k}{\log K} \in [0, 1].
$$
To ensure the probes effectively explore the functional landscape, the distribution of stochastic test probes $\mathcal{X}_{trap}$ must be structurally aligned with the data modality; concrete examples are provided in the experiments section.

\subsection{Randomized Smoothing Kullback-Leibler (KL) Divergence ($\rho_{RS}$)}

While $\rho_{SEA}$ effectively captures the global rigidity of  backdoors, highly adaptive adversaries may attempt to circumvent this filter via entropy regularization. However, the efficacy of our defense architecture is anchored in information asymmetry. Because the specific stochastic test probes used in the $\rho_{SEA}$ auditing phase are stochastically sampled from a private distribution maintained solely by the defending nodes, the adversary operates completely blind to the exact coordinates of the test vectors. Faced with this intractable uncertainty in a high-dimensional space, the attacker is forced to apply a universal penalty across the entire stochastic perturbation space, coercing the model to yield a near-uniform, high-entropy predictive distribution for any random noise it encounters.

To counter this evasion strategy, we introduce the randomized smoothing KL divergence metric. To simultaneously satisfy the attacker's conflicting objectives, yielding high-entropy predictions for random noise while maintaining low-entropy targeted predictions for triggered data, an adaptive adversary must induce a sharp decision boundary  immediately outside the support of the clean data distribution. Consequently, applying a small perturbation to the malicious latent representation leads to high-entropy chaos.

Conversely, benign architectures naturally learn smooth decision boundaries and satisfy local Lipschitz continuity. For a small perturbation $\boldsymbol{\epsilon}$, the predictive output remains within a stable neighborhood of the original distribution. This functional stability ensures that the local decision geometry is preserved, yielding consistent categorical outputs~\citep{cohen2019certified,nakkiran2021deep}. 

We evaluate this local Lipschitz continuity by injecting an isotropic Gaussian perturbation $\boldsymbol{\epsilon} \sim \mathcal{N}(0, \sigma_{rs}^2 \mathbf{I})$ into the latent space, generating a perturbed prediction $\bm{P}_{noisy} := \operatorname{softmax}(\bm{h}_{\bm{\theta}}(\bm{z}+\boldsymbol{\epsilon}))$, where $\bm{z} = \bm{\phi}_{\bm{\theta}}(\bm{x})$ is the clean latent feature vector. To quantify this distributional shift, we utilize the KL Divergence:
$$
D_{KL}(\bm{P}_{clean} \parallel \bm{P}_{noisy}) = \sum_{i=1}^K P_{clean, i} \log \frac{P_{clean, i}}{P_{noisy, i}},
$$
where $\bm{P}_{clean} := \operatorname{softmax}(\bm{h}_{\bm{\theta}}(\bm{z}))$.

To integrate this sensitivity into a bounded anomaly scoring framework, let $\mathcal{E} = \{\boldsymbol{\epsilon}^{(1)}, \boldsymbol{\epsilon}^{(2)}, \dots, \boldsymbol{\epsilon}^{(B)}\}$ be a set of $B$ independent and identically distributed noise realizations. For a candidate parameter vector $\bm{\theta}$ evaluated during aggregation, we define the sequence of perturbed distributions as $\bm{P}_{noisy}^{(m)} = \operatorname{softmax}(\bm{h}_{\bm{\theta}}(\bm{z} + \boldsymbol{\epsilon}^{(m)}))$ for each $m \in \{1, \dots, B\}$. The Randomized Smoothing score ($\rho_{RS}$) is defined as the exponentially decayed empirical mean of the KL divergence over the sample set:
\begin{equation}\label{eq:KL}
	\rho_{RS} = \exp \left( -\frac{\kappa_{RS}}{B} \sum_{m=1}^{B} D_{KL}(\bm{P}_{clean} \parallel \bm{P}_{noisy}^{(m)}) \right),
\end{equation}
where $\kappa_{RS} > 0$ calibrates the decay rate.

\subsection{Activation Kurtosis ($\rho_{\mathcal{A}_K}$)}

While $\rho_{SEA}$ and $\rho_{RS}$ effectively evaluate  the predictive output space, they inherently treat the model's intermediate representations as a black box. To complement these output-level metrics, we introduce activation kurtosis ($\rho_{\mathcal{A}_K}$), which shifts the analytical focus from terminal predictions to the latent activation dynamics of individual neurons.

During the active auditing phase, a defender evaluates the model on a clean batch of $N$ independent input $\mathcal{X}_{probe} = \{\bm{x}^{(1)}, \dots, \bm{x}^{(N)}\}$. For each input $i$, let $\bm{r}^{(i)} \in \mathbb{R}^{d_{prev}}$ denote the input vector $\bm{x}^{(i)}$ to the target layer $\mathbf{\Theta}^{(L)}$. Recall that the layer $\mathbf{\Theta}^{(L)}$ is parameterized by the weight matrix $\mathbf{U} = [\bm{u}_1, \dots, \bm{u}_d]^\top \in \mathbb{R}^{d \times d_{prev}}$, where $\bm{u}_i$ represents the coefficient vector for the $i$-th feature dimension ($i \in \{1, \dots, d\}$). Ignoring the intercept term, the activation in the $i$-th dimension for observation $n$ is given by the linear combination $z_i^{(n)} = \bm{u}_i^\top \bm{r}^{(n)}$. The resulting $d$-dimensional feature vector is $\bm{z}^{(n)} = (z_1^{(n)}, \dots, z_d^{(n)})^\top \in \mathbb{R}^d$.

In a benign model, due to $\ell_2$-regularization (weight decay) during optimization, the activations $z_i^{(n)}$ are uniformly bounded by a constant $c_z$. Consequently, the empirical fourth moment, averaged across both the $d$ dimensions and $N$ observations, is bounded by the baseline capacity:
\begin{equation}\label{eq:kben}
	\frac{1}{N} \sum_{n=1}^N \left( \frac{1}{d} \sum_{i=1}^d \left(z_i^{(n)}\right)^4 \right) \le c_z^4.
\end{equation}

Conversely, suppose the adversary partitions the target layer's neurons such that the set of malicious neurons $\mathcal{S}_{M}$ is non-empty.  Recall that $\bm{r}(\bm{x}) \in \mathbb{R}^{d_{prev}}$ denotes the intermediate representation extracted by the preceding layers for a given input $\bm{x} \in \mathcal{X}$. When a backdoor trigger $\bm{\delta}$ is injected into the input space (yielding a poisoned input $\tilde{\bm{x}} = \bm{x} \oplus \bm{\delta}$), it induces a corresponding latent deviation $\bm{\delta}_u \in \mathbb{R}^{d_{prev}}$, where $\bm{\delta}_u = \bm{r}(\bm{x} \oplus \bm{\delta}) - \bm{r}(\bm{x}).$ To evade geometric detection in decentralized aggregators, this latent trigger must remain exceptionally stealthy, bounded by $\|\bm{\delta}_u\| \le \epsilon \to 0$.

When the trigger is present, the latent deviation induced on the $j$-th target neuron is bounded by the Cauchy-Schwarz inequality:
$$
|\Delta z_j| = |\langle \bm{u}_j, \bm{\delta}_u \rangle| \le \|\bm{u}_j\| \|\bm{\delta}_u\| \le \|\bm{u}_j\| \epsilon.
$$

Let $\bm{h} \in \mathbb{R}^K$ denote the benign logit vector for a clean input $\bm{x}$, and let $y_t$ denote the adversary's target class. Under the presence of the latent trigger $\bm{\delta}_u$, the poisoned latent representation $\tilde{\bm{z}} = \bm{z} + \Delta\bm{z}$ yields a perturbed logit vector $\tilde{\bm{h}} = \bm{h} + \Delta\bm{h}$, where $\Delta\bm{h} = \mathbf{V} \Delta\bm{z}$.

Because the $\operatorname{softmax}$ function is monotonic, achieving a targeted classification flip ($y_t = \arg\max_k \tilde{h}_k$) requires the poisoned target logit $\tilde{h}_{y_t}$ to  dominate all competing logits. This necessitates a minimum required decision margin $C > 0$:
$$
\Delta h_{y_t} > \max_{k \neq y_t} \tilde{h}_k - h_{y_t} := C.
$$

Substituting the logit perturbation, we establish the lower bound for the malicious weights:
$$
|\mathcal{S}_{M}| \cdot V_{max} \cdot \left(\max_{j \in \mathcal{S}_{M}} \|\bm{u}_j\|\right) \cdot \epsilon \ge \sum_{j \in \mathcal{S}_{M}} v_{y_t, j} \Delta z_j = \Delta h_{y_t} \ge C,
$$
where $V_{max} = \max_k |v_{y_t, k}|$. This  implies that $\max_{j \in \mathcal{S}_{M}} \|\bm{u}_j\| \ge \frac{C}{|\mathcal{S}_{M}| V_{max} \epsilon} = \mathcal{O}(1/\epsilon)$. 

Let $\hat{\mathbf{\Sigma}}_u = \frac{1}{N} \sum_{n=1}^N \bm{r}^{(n)} (\bm{r}^{(n)})^\top$ denote the empirical second-moment matrix of the clean intermediate features. Assuming the sampled benign feature space is diverse and non-degenerate, $\hat{\mathbf{\Sigma}}_u$ is positive definite with a minimum eigenvalue $\hat{\lambda}_{min} > 0$. By the properties of the Rayleigh quotient, the empirical average of the squared magnitude for the malicious activation satisfies:
$$
\frac{1}{N} \sum_{n=1}^N (z_j^{(n)})^2 = \bm{u}_j^\top \hat{\mathbf{\Sigma}}_u \bm{u}_j \ge \hat{\lambda}_{min} \|\bm{u}_j\|^2.
$$
Applying Jensen's inequality, we obtain the lower bound for the fourth moment:
\begin{equation}\label{eq:kmal}
	\max_{j\in\mathcal{S}_{M}}	\frac{1}{N} \sum_{n=1}^N (z_j^{(n)})^4 \ge \left( \frac{1}{N} \sum_{n=1}^N (z_j^{(n)})^2 \right)^2 \ge \hat{\lambda}_{min}^2 \left( \frac{C}{|\mathcal{S}_{M}| V_{max} \epsilon} \right)^4 = \Omega\left(\frac{1}{\epsilon^4}\right).
\end{equation}

The stark theoretical contrast between \eqref{eq:kben} and \eqref{eq:kmal} demonstrates that kurtosis provides a  sufficient margin to distinguish malicious models from benign ones. Accordingly, we define the activation kurtosis metric ($\rho_{\mathcal{A}_K}$) as:
$$
\rho_{\mathcal{A}_K} := \frac{1}{N} \sum_{n=1}^N \frac{\frac{1}{d} \sum_{k=1}^d \left(z_k^{(n)} - \mu_z^{(n)}\right)^4}{\left(\sigma_z^{(n)}\right)^4},
$$
where $\mu_z^{(n)} = \frac{1}{d} \sum_{k=1}^d z_k^{(n)}$ and $(\sigma_z^{(n)})^2 = \frac{1}{d} \sum_{k=1}^d \left(z_k^{(n)} - \mu_z^{(n)}\right)^2$ represent the feature-wise mean and variance of the activations for the $n$-th probe, respectively.

	\section{ Defense Strategy}\label{sec:4}
	This section operationalizes the previously established diagnostic metrics within a proactive defense framework to  network-level immunity.
	
	\subsection{Robust Consensus and Temporal Trust Evolution}
	\label{subsec:trust_evolution}
	
While the active interventional metrics introduced in Section \ref{sec:metrics} ($\rho_{SEA}$, $\rho_{RS}$, and $\rho_{\mathcal{A}_K}$) are relatively lightweight, requiring only forward-pass process, exhaustively validating the entire neighborhood $\mathcal{N}_i$ at every communication round remains computationally prohibitive in high-dimensional, resource-constrained DFL environments. Consequently, each honest node must operate under a strict audit budget $A_B$ ($A_B \leq |\mathcal{N}_i|$), restricting it to selecting only a limited subset of neighbors $\mathcal{A}_t \subset \mathcal{N}_i$ for auditing and aggregation at each round.
	
	To transform these abnormal diagnostic metrics into a unified, consensus-based selection framework, we compute a normalized anomaly score for each update. In highly non-IID federated settings, traditional Z-scores based on sample means and variances are easily skewed by extreme Byzantine outliers, which can "mask" their own presence by inflating the neighborhood variance. To guarantee statistical robustness, we calculate the modified Z-value using the median absolute deviation (MAD)~\citep{iglewicz1993how}. For a generic metric vector $\mathbf{m}$ evaluated across the audited subset $\mathcal{A}_t$, the robust Z-score for neighbor $j$ is defined as:
	$$
	Z_{metric}^{(j)} =  \frac{|\mathbf{m}_j - \text{Median}(\mathbf{m})|}{\text{MAD}(\mathbf{m})}.
	$$

	We compute $Z_{SEA}^{(j)}$	, $Z_{RS}^{(j)}$, and $Z_{AK}^{(j)}$ correspondingly. This multi-dimensional penalty is then mapped to a continuous reward $r_{t,j} \in (0, 1]$ via an exponential decay function:
	\begin{equation}\label{eq:reward}
	r_{t,j} = \exp\left\{- \left(Z_{SEA}^{(j)} + Z_{RS}^{(j)} + Z_{AK}^{(j)}\right)\right\},
	\end{equation}
	Under this mapping, any neighbor exhibiting substantial deviation from local consensus in any metric will see its reward $r_{t,j}$ collapse  toward zero.
	
While the aforementioned robust Z-scores provide an effective cross-sectional evaluation of neighbor behavior within a single communication round, they remain structurally blind to temporal anomalies. To capture the long-term behavioral trajectory of the participants, the defense node maintains a historical trust score $Q_t(j)$ for each neighbor $j$. Initialized with an optimistic prior $Q_0(j) =0.5$ to encourage network exploration, this state variable is dynamically updated via an exponentially weighted moving average (EWMA):
	\begin{equation}\label{eq:q}
Q_{t+1}(j) = (1 - \alpha) \cdot Q_t(j) + \alpha \cdot r_{t,j}, \quad \forall j \in \mathcal{A}_t,
	\end{equation}
where $\alpha \in (0, 1)$ is the smoothing factor governing the decay rate of historical observations.
	
This exponentially weighted formulation prioritizes recent empirical rewards, enabling the system to react swiftly to sudden adversarial attacks. Simultaneously, its smoothing properties prevent the permanent penalization of honest nodes that exhibit transient statistical variance driven by local data shifts.
\subsection{Adaptive Aggregation via Multi-Armed Bandit}
\label{subsec:agg}

Due to computational constraints during active auditing, a defense node must draw a restricted subsample $\mathcal{A}_t \subset \mathcal{N}_i$ at each communication round. Relying on uniform random sampling without historical context yields a highly variant and non-robust estimator in an adversarial environment, leaving the aggregation vulnerable to Byzantine errors. Conversely, a greedy strategy, which selects observations based solely on maximum historical trust, introduces severe selection bias. In non-IID settings, this deterministic approach artificially truncates the support of the empirical data distribution, leading to a loss of statistical diversity that is essential for robust generalization. Furthermore, this pure exploitation strategy is prone to premature convergence based on early point estimates, permanently excluding potentially benign nodes from the consensus process.

This constrained sampling of neighbors constitutes a classic exploration-exploitation trade-off, which can be formulated using the Multi-Armed Bandit (MAB) framework from sequential decision theory. While recent literature primarily applies MAB models to centralized federated learning to optimize resource allocation and algorithmic convergence \cite{xia2020multi, huang2020efficiency}, we adapt this sequential sampling paradigm for decentralized networks.

To formalize this trade-off, at the beginning of iteration $t$, defense node $i$ computes a discounted upper confidence bound (UCB) statistic, $U_t(j)$, for each adjacent node $j \in \mathcal{N}_i$:
\begin{equation}\label{UCB}
	U_t(j) = \underbrace{Q_t(j)}_{\text{Exploitation (Tracking Estimator)}} + \underbrace{c \cdot \sqrt{\frac{2 \ln n_{\gamma}(t)}{N_{t,\gamma}(j)}}}_{\text{Exploration (Discounted Uncertainty)}},
\end{equation}
where $c > 0$ acts as a tuning parameter scaling the confidence interval. To maintain theoretical alignment with the exponentially weighted moving average trust score $Q_t(j)$, the sampling frequencies are evaluated using a geometrically discounted window. Let $A_s \subseteq \mathcal{N}_i$ denote the subset of neighbors sampled for auditing at iteration $s$. The term $N_{t,\gamma}(j) = \sum_{s=1}^t \gamma^{t-s} \mathbb{I}_{{j \in A_s}}$ represents the effective discounted sample size for node $j$ using a temporal decay factor $\gamma \in (0, 1]$, and $n_{\gamma}(t) = \sum_{k \in \mathcal{N}_i} N_{t,\gamma}(k)$ represents the total effective sample size across the neighborhood.
\begin{algorithm}[H]
	\caption{Adaptive Sequential Sampling and Stratified Aggregation (at iteration $t$)}
	\label{alg:MAB}
	\begin{algorithmic}[1]
		\Statex \textbf{Input:} 
		Auditing subsampling ratio $r_A \in (0, 1]$, aggregation subsampling ratio $r_S \in (0, 1]$, critical trust threshold $\tau_{agg}$, pre-allocated stratified weights $W_{self}$, $\tilde{W}_{ij}$ and $\tilde{W}^*_{ij}$.
		
		\Statex \textbf{\textit{Part I: Active Auditing and Aggregation for Defense Nodes}}
		\For{each defense node $i \in \mathcal{D}$}
		\State \textbf{Phase 1: Audit subsampling.} Draw a subsample of adjacent nodes $\mathcal{A}_t \subset \mathcal{N}_i$ (with size $|\mathcal{A}_t| = \lceil r_A \cdot |\mathcal{N}_i| \rceil$) using weighted random sampling, where the inclusion probability for each node $j$ is proportional to its upper confidence bound statistic $U_t(j)$ via \eqref{UCB}.
		
		\State \textbf{Phase 2: Trust score update.} Compute the diagnostic metrics $\rho_{SEA}$, $\rho_{RS}$, and $\rho_{\mathcal{A}_K}$ for each audited neighbor $j \in \mathcal{A}_t$. Calculate the normalized empirical reward $r_{t,j}$ via \eqref{eq:reward} and update the  trust score $Q_{t+1}(j)$ via \eqref{eq:q}.
		
		\State \textbf{Phase 3: Aggregation subsampling.} Reject abnormal updates by conditioning on the  trust score exceeding the critical threshold:
		$ \mathcal{S}_t = \left\{ j \in \mathcal{A}_t \mid Q_{t+1}(j) > \tau_{agg} \right\}. $ 
		Draw a final aggregation subsample $\mathcal{\tilde{A}}_t \subset \mathcal{S}_t$, with size $|\mathcal{\tilde{A}}_t| = \lceil r_S \cdot |\mathcal{S}_t| \rceil$, where the inclusion probability for node $j$ is proportional to its updated trust estimate $Q_{t+1}(j)$.
		\State Compute the weighted average :
		\begin{equation*}
			\bm{\theta}_{i}^{(t+1)} \gets \frac{1}{|\tilde{A}_t|}\sum_{j \in \mathcal{\tilde{A}}_t} \bm{\theta}_{j}^{(t)} .
		\end{equation*}
		\EndFor
		
		\Statex
		\Statex \textbf{\textit{Part II: Stratified Convex Aggregation for Non-Defense Nodes}}
		\For{each non-defense $i \notin \mathcal{D}$}
		
		\State \textbf{Stratified Consensus.} Compute the updated parameter estimate using pre-allocated stratified convex weights $\tilde{W}_{ij}$ (allocated to defense neighbors) and $\tilde{W}^*_{ij}$ (allocated to non-defense neighbors):
		\begin{equation*}
			\bm{\theta}_i^{(t+1)} \gets W_{self}\bm{\theta}_i^{(t)} + \sum_{j \in \mathcal{N}_i \cap \mathcal{D}} \tilde{W}_{ij}\bm{\theta}_j^{(t)} + \sum_{j \in \mathcal{N}_i \setminus \mathcal{D}} \tilde{W}^*_{ij}\bm{\theta}_j^{(t)}.
		\end{equation*}
		\EndFor
		
		\Statex \textbf{Output:} Updated global parameter estimates $\{\bm{\theta}_i^{(t+1)}\}_{i \in \mathcal{V}}$.
	\end{algorithmic}
\end{algorithm}
The algebraic structure of this UCB statistic inherently prevents the permanent exclusion of any individual node. If a neighbor $j$ is rarely sampled, its effective sample size $N_{t,\gamma}(j)$ remains near zero. Consequently, as the global iteration $t$ progresses, the monotonically increasing logarithmic numerator inflates the uncertainty bound. This mechanism guarantees that the upper confidence limit will eventually dominate even a heavily penalized point estimate $Q_t(j)$, compelling the algorithm to periodically sample neglected nodes and iteratively update their empirical trust estimates.

	For non-defense nodes, which lack the computational capacity to execute active auditing, resilience is achieved via structural dependency. Assuming the network graph is designed such that every standard node shares an edge with at least one defense node, these constrained units conditionally allocate the majority of their convex aggregation weights to their verified defense neighbors. The complete defense protocol, integrating sequential sampling for defense nodes and stratified convex aggregation for standard nodes, is formalized in Algorithm \ref{alg:MAB}.
\subsection{Topology-Aware Defense Placement}
\label{subsec:topo}

While the sequential sampling strategy proposed in Section \ref{subsec:agg} provides robust estimation against adversarial contamination, the continuous evaluation of empirical diagnostics incurs substantial computational overhead. In large-scale decentralized systems, deploying active auditing at every vertex is cost-prohibitive. Consequently, the auditing mechanism must be restricted to a limited subset of defense nodes, denoted as $\mathcal{D}$ (where $|\mathcal{D}| \ll |\mathcal{V}|$). 

The strategic allocation of these defense nodes across the network topology is critical for maximizing global resilience. Effective placement relies on three fundamental structural principles: node degree, betweenness centrality, and 1-hop defense coverage.

In the DFL environment, computing exact global betweenness is infeasible. To address this limitation, we utilize a distributed approximation based on ego-network betweenness centrality \citep{everett2005ego}. This metric evaluates structural bottlenecks strictly within a node's localized $k$-hop neighborhood. For a given node $v$, the ego-network betweenness centrality quantifies the fraction of localized shortest paths that pass through $v$, defined as:
\begin{equation}
	C_{Ego}^{(k)}(v) = \sum_{\substack{s, t \in \mathcal{V}_v^{(k)} \setminus \{v\} \\ s \neq t}} \frac{\sigma_{st}^{(k)}(v)}{\sigma_{st}^{(k)}},
\end{equation}
where $\sigma_{st}^{(k)}$ represents the total number of shortest paths connecting node $s$ to node $t$ strictly within the $k$-hop induced subgraph $\mathcal{G}_v^{(k)}$, and $\sigma_{st}^{(k)}(v)$ denotes the number of these localized shortest paths that pass through node $v$. The vertex set $\mathcal{V}_v^{(k)}$ comprises all nodes within a $k$-hop distance from $v$: $\mathcal{V}_v^{(k)} = \{u \in \mathcal{V} \mid d(u, v) \leq k\}$.

Beyond identifying hubs and bottlenecks, the defense allocation must ensure that standard nodes remain structurally dependent on defense neighbors. This requirement is formalized through the concept of 1-hop defense coverage. A node $u$ is considered covered if $(\mathcal{N}_u \cup \{u\}) \cap \mathcal{D} \neq \emptyset$. The overall coverage provided by the defense set $\mathcal{D}$ is:
\begin{equation}
	\mathcal{C}(\mathcal{D}) = \bigcup_{v \in \mathcal{D}} \left( \mathcal{N}_v \cup \{v\} \right).
\end{equation}
Ideally, the defense placement acts as an approximate minimum dominating set (MDS) \citep{kuhn2003constant}. This guarantees that malicious gradients cannot propagate through structurally unmonitored clusters before encountering a defense checkpoint. Next, we introduce specific placement strategies tailored to different network topologies.

\paragraph{Scale-free Networks}
Scale-free networks \citep{barabasi1999emergence} are characterized by a power-law degree distribution $P(d) \propto d^{-\beta}$. This indicates the presence of a few highly connected hubs alongside numerous low-degree peripheral nodes. To capture both localized degree prominence and structural bottlenecks, we formulate a hybrid local centrality metric:
\begin{equation}
	S_{score}(v) = \alpha_0 \cdot C_{Ego}^{(k)}(v) + (1-\alpha_0) \cdot C_D^{(k)}(v),
\end{equation}
where $C_D^{(k)}(v)$ is the normalized local degree centrality within $\mathcal{G}_v^{(k)}$. 

Because finding an exact MDS is NP-hard, we employ a greedy approximation strategy leveraging the submodularity of the coverage function. From the top $k_0$ nodes with the highest $S_{score}$, we iteratively select $|\mathcal{D}|$ defense nodes that yield a strictly positive marginal gain in 1-hop coverage. At each iteration, a candidate node $v$ is included in $\mathcal{D}$ if and only if $(\mathcal{N}_v \cup \{v\}) \setminus \mathcal{C} \neq \emptyset$.

\paragraph{Random-regular Networks}
In contrast to the heavy-tailed distribution of scale-free graphs, a random regular graph ensures every node possesses an identical, constant degree $|\mathcal{N}_v| = d_0, \forall v \in \mathcal{V}$. Because the topology lacks structural hubs and centrality metrics are largely homogeneous, the placement strategy shifts purely to maximizing the 1-hop defense coverage $\mathcal{C}(\mathcal{D})$. 

Under decentralized constraints, we employ a localized greedy coverage heuristic. Each candidate node $v \notin \mathcal{D}$ evaluates its potential marginal contribution:
$$
	\Delta(v \mid \mathcal{C}) := |(\mathcal{N}_v \cup \{v\}) \setminus \mathcal{C}|.
$$
Nodes locally broadcast this metric, and a candidate $v$ is selected into $\mathcal{D}$ if $\Delta(v \mid \mathcal{C}) > \max_{u \in \mathcal{N}_v} \Delta(u \mid \mathcal{C})$.

As demonstrated in our subsequent theoretical analysis, by aligning this decentralized defense placement with the graph's structural vulnerabilities, these strategies effectively constrain the spectral radius of the error propagation subsystem.

	\subsection{Theoretical Properties}
	Before presenting the convergence analysis, we first formalize the effective dynamic mixing matrix $\mathbf{W}^{(t)} = [w_{ij}^{(t)}] \in \mathbb{R}^{n \times n}$ induced by our adaptive MAB-guided aggregation protocol. Following the execution logic of Algorithm \ref{alg:MAB}, at each communication round $t$, the aggregation weight $w_{ij}^{(t)}$ assigned by node $i$ to node $j$ is determined by its specific topological role.
	
	For a defense node ($i \in \mathcal{D}$), the aggregation weights are dynamically determined by the active auditing and sampling mechanism described in Part I of Algorithm \ref{alg:MAB}. Let $\mathcal{S}_t$ denote the subset of audited neighbors exceeding the trust threshold $\tau_{agg}$, and let $\mathcal{\tilde{A}}_t \subseteq \mathcal{S}_t$ denote the final aggregation subsample drawn proportionally to the updated posterior trust scores. The global matrix entries for row $i$ correspond to a uniform average over this verified subset:
	$$
	w_{ij}^{(t)} = 
	\begin{cases} 
		\frac{1}{|\mathcal{\tilde{A}}_t|}, & \text{if } j \in \mathcal{\tilde{A}}_t, \\ 
		0, & \text{otherwise}.
	\end{cases}
	$$
	This stochastic formulation ensures that only audited, high-trust neighbors contribute to the defense node's model update, while effectively mitigating selection bias through proportional subsampling.
	
	In contrast, non-defense nodes ($i \notin \mathcal{D}$) lack the computational capacity for active auditing and instead execute the stratified inertial aggregation specified in Part II of Algorithm \ref{alg:MAB}. The global matrix entries for row $i$ are deterministically mapped from the pre-allocated stratified weights:
	$$
	w_{ij}^{(t)} = 
	\begin{cases} 
		W_{self}, & \text{if } j = i, \\
		\tilde{W}_{ij}, & \text{if } j \in \mathcal{N}_i \cap \mathcal{D}, \\
		\tilde{W}^*_{ij}, & \text{if } j \in \mathcal{N}_i \setminus \mathcal{D}, \\
		0, & \text{otherwise}.
	\end{cases}
	$$
	By design, this protocol guarantees that the dynamic matrix $\mathbf{W}^{(t)}$ remains row-stochastic ($\sum_{j=1}^n w_{ij}^{(t)} = 1, \forall i \in \mathcal{V}$) at every iteration.  We establish the following standard assumptions for the objective function and network connectivity.
	\begin{assumption}[$L$-smoothness]\label{ass:1}
		Each local loss function $f_i$ is $L$-smooth, i.e., $\|\nabla f_i(\bm{\theta}) - \nabla f_i(\bm{\theta}')\| \leq L \|\bm{\theta} - \bm{\theta}'\|$ for any $\bm{\theta}, \bm{\theta}' \in \mathbb{R}^p$.
	\end{assumption}
	
	\begin{assumption}[Network connectivity]\label{ass:2}
		The effective mixing matrix $\mathbf{W}^{(t)}$ satisfies $\rho_w = |\lambda_2(\mathbf{W}^{(t)})| < 1$.
	\end{assumption}
	
	\begin{assumption}[Bounded gradient variance]\label{ass:3}
		The variance of the stochastic gradients is bounded: $\mathbb{E}[\|\nabla f_i(\bm{\theta}) - \nabla F(\bm{\theta})\|^2] \leq \sigma^2$ for any $\bm{\theta} \in \mathbb{R}^p$.
	\end{assumption}
	
	\begin{assumption}[Bounded Gradients]\label{ass:4}
		The global objective function $F$ has bounded gradients, i.e., there exists a constant $G$ such that $\sup_{\bm{\theta}} \|\nabla F(\bm{\theta})\| \leq G$.
	\end{assumption}
	
	\begin{assumption}[Learning Rate]\label{ass:5}
		The learning rate satisfies $0 < \eta =\mathcal{O}(1/\sqrt{T})$.
	\end{assumption}
	
	Assumptions \ref{ass:1} through \ref{ass:3} are standard in the theoretical analysis of decentralized SGD algorithms~\citep{lian2017can}.

Applying the separation principle, we decompose the decentralized stochastic process into a nominal trajectory and a backdoor diffusion process. This decoupling enables us to analyze the backdoor propagation via the stability of the error subsystem, independently of the primary optimization task's convergence properties. We define two virtual sequences to characterize the system. Assuming each node performs exactly one local update per communication round, the actual model state for node $i$ under attack evolves as:
\begin{equation}
	\bm{\theta}_i^{(t+1)} = \sum_{j=1}^n w_{ij}^{(t)} \underbrace{\left( \bm{\theta}_j^{(t)} - \eta \nabla f_j(\bm{\theta}_j^{(t)}) + \mathbb{I}_{\{j \in \mathcal{M}\}} \mathbf{m}	_j^{(t)} \right)}_{\text{Local Update \& Injection}},
\end{equation}
where $\mathcal{M}$ is the set of malicious nodes. To prevent the dilution of the backdoor trigger, malicious nodes isolate themselves during aggregation; thus, for any $i \in \mathcal{M}$, the weights are deterministically set to $w_{ii}^{(t)} = 1 - \epsilon$ and distributes an arbitrarily small residual weight $\epsilon > 0$ uniformly among its neighbors. 

Conversely, the ideal clean system, assuming synchronous, identical initial states and experiencing the same realized network topology, evolves according to standard consensus:
\begin{equation}
	\hat{\bm{\theta}}_i^{(t+1)} = \sum_{j=1}^n w_{ij}^{(t)} \left( \hat{\bm{\theta}}_j^{(t)} - \eta \nabla f_j(\hat{\bm{\theta}}_j^{(t)}) \right).
\end{equation}

We define the local backdoor deviation in the parameter space as $\boldsymbol{e}_i^{(t)} = \bm{\theta}_i^{(t)} - \hat{\bm{\theta}}_i^{(t)}$.  Subtracting the nominal dynamics from the attacked dynamics yields the step-wise evolution of the local bias:
$$
\boldsymbol{e}_i^{(t+1)} = \sum_{j=1}^n w_{ij}^{(t)} \left[ \boldsymbol{e}_j^{(t)} - \eta \left( \nabla f_j(\bm{\theta}_j^{(t)}) - \nabla f_j(\hat{\bm{\theta}}_j^{(t)}) \right) + \mathbb{I}_{\{j \in \mathcal{M}\}} \mathbf{m}	_j^{(t)} \right].
$$
By stacking these local biases into a global network state matrix $\boldsymbol{E}	^{(t)} = [\boldsymbol{e}_1^{(t)}, \dots, \boldsymbol{e}_n^{(t)}]^\top$, we formally recover the spatiotemporal infection state formulation introduced in Section \ref{sec:diffu}. The local equations seamlessly vectorize into the global error dynamics:
\begin{equation}\label{eq:4.1}
	\boldsymbol{E}	^{(t+1)} = (\mathbf{W}^{(t)} \otimes \mathbf{I}_p) \left( \boldsymbol{E}	^{(t)} - \eta \Delta \mathbf{G}_t \right) + \tilde{\mathbf{M}	}^{(t)},
\end{equation}
where $\mathbf{W}^{(t)} = [w_{ij}^{(t)}]$ is the effective network mixing matrix, 
$$
\Delta \mathbf{G}_t = [\nabla f_1(\bm{\theta}_1^{(t)}) - \nabla f_1(\hat{\bm{\theta}}_1^{(t)}), \dots, \nabla f_n(\bm{\theta}_n^{(t)}) - \nabla f_n(\hat{\bm{\theta}}_n^{(t)})]^\top,
$$ 
and $\tilde{\mathbf{M}	}^{(t)} = (\mathbf{W}^{(t)} \otimes \mathbf{I}_p) \mathbf{M}	^{(t)}$ represents the topology-filtered malicious perturbation matrix.
	
	Equation \eqref{eq:4.1} serves as the precise microscopic foundation for the macroscopic diffusion model defined in \eqref{eq:diffusion}. In this refined formulation, the stochastic gating effects of the active auditing mechanism are intrinsically absorbed into the spectrum of the dynamic mixing matrix $\mathbf{W}^{(t)}$. 	We partition the network into two disjoint sets: defense nodes $\mathcal{D}$ that implement robust aggregation, and unverified standard nodes $\mathcal{U} = \mathcal{V} \setminus \mathcal{D}$ that rely on standard aggregation. The block structure of the effective mixing matrix is given by:
	$$
	\mathbf{W}^{(t)} = 
	\begin{bmatrix}
		\mathbf{w}_{\mathcal{DD}}^{(t)} & \mathbf{w}_{\mathcal{DU}}^{(t)} \\
		\mathbf{w}_{\mathcal{UD}}^{(t)} & \mathbf{w}_{\mathcal{UU}}^{(t)}
	\end{bmatrix}.
	$$
	Defining the intra-stratum error propagation rates as $\rho_{\mathcal{D}} = \|\mathbf{w}_{\mathcal{DD}}^{(t)}\|(1+\eta L)$ and $\rho_{\mathcal{U}} = \|\mathbf{w}_{\mathcal{UU}}^{(t)}\|(1+\eta L)$. The system's effective error spectral radius is bounded by:
	$$
	\rho_{\text{err}} = \max\left\{\rho_{\mathcal{D}}, \rho_{\mathcal{U}} + \frac{\|\mathbf{w}_{\mathcal{UD}}^{(t)}\|\cdot\|\mathbf{w}_{\mathcal{DU}}^{(t)}\|(1+\eta L)^2}{1-\rho_{\mathcal{D}}}\right\}.
	$$
	
	\begin{assumption}[Bounded Malicious Injection]\label{ass:6}
		The malicious injection magnitude is uniformly bounded, i.e., there exists a constant $C_u > 0$ such that $\sup_{t \ge 0} \|\mathbf{m}_j^{(t)}\| \leq C_u$ for all malicious nodes $j \in \mathcal{M}$.
	\end{assumption}
	
	\begin{theorem}\label{thm:2}
		Under Assumptions \ref{ass:1} through \ref{ass:6}, assume a stability condition $\rho_{\text{err}} < 1$. Then  the average expected squared gradient norm of the empirical system over $T$ iterations is bounded by:
		$$
		\begin{aligned}
			\frac{1}{T}\sum_{t=0}^{T-1}\mathbb{E}[\|\nabla F(\bar{\bm{\theta}}^{(t)})\|^2] \leq & \underbrace{\mathcal{O}\left(\frac{1}{\sqrt{nT}}\right)}_{\text{Standard Asymptotic Rate}} + \underbrace{\mathcal{O}\left(\frac{1}{T(1-\rho_{\text{nom}})^2}\right)}_{\text{Consensus error}} \\
			& + \underbrace{\mathcal{O}\left( \frac{q_m C_u^2}{(1-\rho_{\text{err}})^2}\right)}_{\text{Asymptotic Adversarial Error}},
		\end{aligned}
		$$
		where $q_m = |\mathcal{M}|/n$ is the fraction of malicious clients.
	\end{theorem}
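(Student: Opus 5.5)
The plan follows the separation principle set up before the statement. Write $\bm{\theta}_i^{(t)} = \hat{\bm{\theta}}_i^{(t)} + \boldsymbol{e}_i^{(t)}$ and define the averages $\bar{\bm{\theta}}^{(t)} = \frac1n\sum_i \bm{\theta}_i^{(t)}$ and $\bar{\hat{\bm{\theta}}}^{(t)}$. The gradient at the actual average splits as
\[
\|\nabla F(\bar{\bm{\theta}}^{(t)})\|^2 \le 2\|\nabla F(\bar{\hat{\bm{\theta}}}^{(t)})\|^2 + 2L^2\|\bar{\bm{e}}^{(t)}\|^2 ,
\]
using Assumption \ref{ass:1}. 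Each of the two terms is then controlled separately: the first by a standard decentralized SGD argument on the clean system, and the second by a stability analysis of the error recursion \eqref{eq:4.1}. Interpreting $\rho_{\text{nom}}$ as the contraction factor $\rho_w$ of Assumption \ref{ass:2} for the nominal consensus, the three terms in the theorem come respectively from the nominal optimization, the nominal consensus error, and the adversarial error.

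\textbf{Nominal trajectory.} I would follow the template of \citet{lian2017can}.
\begin{itemize}[leftmargin=*]
\item Apply the descent lemma to $F$ at $\bar{\hat{\bm{\theta}}}^{(t)}$. This uses (near) column-stochasticity so that the average evolves as $\bar{\hat{\bm{\theta}}}^{(t+1)} = \bar{\hat{\bm{\theta}}}^{(t)} - \frac{\eta}{n}\sum_j \nabla f_j(\hat{\bm{\theta}}_j^{(t)})$. Any deviation from double stochasticity is absorbed into a bias term bounded via Assumption \ref{ass:4}.
\item Bound the consensus distance $\sum_i\|\hat{\bm{\theta}}_i^{(t)}-\bar{\hat{\bm{\theta}}}^{(t)}\|^2$ by unrolling the recursion with contraction $\rho_w$. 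Using Assumptions \ref{ass:3}--\ref{ass:4}, this gives $\mathcal{O}(\eta^2 (\sigma^2+G^2)/(1-\rho_w)^2)$.
\item Telescope over $t$, divide by $T\eta$, and set $\eta = \mathcal{O}(1/\sqrt{T})$ (Assumption \ref{ass:5}, with a $\sqrt n$ scaling). This yields $\mathcal{O}(1/\sqrt{nT}) + \mathcal{O}(1/(T(1-\rho_{\text{nom}})^2))$.
\end{itemize}

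\textbf{Error subsystem.} In \eqref{eq:4.1}, $L$-smoothness gives $\|\Delta \mathbf{G}_t\| \le L\|\boldsymbol{E}^{(t)}\|$ row-wise. Hence each block satisfies the recursions
\[
\|\boldsymbol{E}_{\mathcal{D}}^{(t+1)}\| \le \rho_{\mathcal{D}}\|\boldsymbol{E}_{\mathcal{D}}^{(t)}\| + \|\mathbf{w}_{\mathcal{DU}}^{(t)}\|(1+\eta L)\|\boldsymbol{E}_{\mathcal{U}}^{(t)}\| + \|\tilde{\mathbf{M}}_{\mathcal{D}}^{(t)}\|,
\]
\[
\|\boldsymbol{E}_{\mathcal{U}}^{(t+1)}\| \le \rho_{\mathcal{U}}\|\boldsymbol{E}_{\mathcal{U}}^{(t)}\| + \|\mathbf{w}_{\mathcal{UD}}^{(t)}\|(1+\eta L)\|\boldsymbol{E}_{\mathcal{D}}^{(t)}\| + \|\tilde{\mathbf{M}}_{\mathcal{U}}^{(t)}\|.
\]
This is a $2\times2$ nonnegative comparison system, and I would analyze it in three steps.
\begin{enumerate}[leftmargin=*]
\item Since $\rho_{\mathcal{D}} < 1$, solve the $\mathcal{D}$-block as a geometric sum and substitute it into the $\mathcal{U}$-block. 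This is a Schur-complement step, and it produces exactly the effective rate $\rho_{\mathcal{U}} + \|\mathbf{w}_{\mathcal{UD}}\|\|\mathbf{w}_{\mathcal{DU}}\|(1+\eta L)^2/(1-\rho_{\mathcal{D}})$, so $\rho_{\text{err}}$ governs both blocks.
\item With $\boldsymbol{E}^{(0)}=0$ and $\|\tilde{\mathbf{M}}^{(t)}\| \le \|\mathbf{W}^{(t)}\|\sqrt{|\mathcal{M}|}\,C_u$ (Assumption \ref{ass:6}), summing the geometric series gives
\[
\|\boldsymbol{E}^{(t)}\| \lesssim \frac{\sqrt{|\mathcal{M}|}\,C_u}{1-\rho_{\text{err}}} \quad \text{for all } t.
\]
\item Then $\|\bar{\bm e}^{(t)}\|^2 \le \frac1n\|\boldsymbol{E}^{(t)}\|^2 \lesssim q_m C_u^2/(1-\rho_{\text{err}})^2$. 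Averaging over $t$ gives the third term, and combining with the nominal bound closes the proof.
\end{enumerate}

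\textbf{Main obstacle.} The hardest part is the time-varying, random, and merely row-stochastic $\mathbf{W}^{(t)}$. It makes the clean average drift and makes the Schur-complement substitution across a time-varying sequence of blocks delicate. I would handle this in two ways:
\begin{itemize}[leftmargin=*]
\item Interpret $\rho_{\mathcal{D}},\rho_{\mathcal{U}}$ and the off-diagonal norms as uniform-in-$t$ suprema, so the comparison system has constant coefficients and its $2\times2$ matrix has spectral radius at most $\rho_{\text{err}}$. I would verify this via the Perron bound for nonnegative matrices.
\item Treat the average drift by bounding it with consensus error times $G$, so it merges into the consensus term.
\end{itemize}
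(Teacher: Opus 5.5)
Your outline reproduces the paper's proof essentially step for step: the split with $2L^2\|\bar{\boldsymbol e}^{(t)}\|^2$, \citet{lian2017can} for the clean run, Jensen, the $2\times 2$ comparison system of Lemma~\ref{lem:a4}, and $U^2\le|\mathcal M|C_u^2$. The problem is that the error-subsystem step cannot work, either in your version or in the paper's.

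Apply your block-norm estimates to the identity $\mathbf 1=\mathbf W^{(t)}\mathbf 1$, which holds by row-stochasticity. With the comparison matrix $\mathbf T$ of Lemma~\ref{lem:a4}, the vector $v=(\sqrt{|\mathcal D|},\sqrt{|\mathcal U|})^\top$ satisfies $(1+\eta L)v\le \mathbf T v$ componentwise. Eliminating the first component when $\rho_{\mathcal D}<1$ gives $\rho_{\mathcal U}+\|\mathbf w_{\mathcal{UD}}^{(t)}\|\,\|\mathbf w_{\mathcal{DU}}^{(t)}\|(1+\eta L)^2/(1-\rho_{\mathcal D})\ge 1+\eta L$. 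Hence $\rho_{\text{err}}\ge 1$ for every row-stochastic $\mathbf W^{(t)}$ (trivially so if $\rho_{\mathcal D}\ge 1$). The stability hypothesis can therefore never hold, so Theorem~\ref{thm:2} is vacuous as stated, and the geometric series in your step~2 does not converge.

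This is structural, not a matter of constants. $\mathbf W^{(t)}$ does not damp the consensus component of $\boldsymbol E^{(t)}$, and $L$-smoothness only supplies the expansion factor $1+\eta L$. Concretely, take a fixed $\mathbf W$ with left Perron vector $\boldsymbol{\pi}$, constant $f_i$ (so $\Delta\mathbf G_t=0$), a constant injection $\mathbf M$, and $\boldsymbol E^{(0)}=0$. Then \eqref{eq:4.1} gives $\boldsymbol{\pi}^\top\boldsymbol E^{(t)}=t\,\boldsymbol{\pi}^\top\mathbf M$, which grows linearly whenever $\boldsymbol{\pi}^\top\mathbf M\neq 0$. A uniform bound needs a genuine contraction acting on the averaged error, such as the attenuation $\boldsymbol{\Lambda}$ in the model of Section~\ref{sec:diffu}, which \eqref{eq:4.1} lacks.

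Independently, the Perron check you plan for $\rho(\mathbf T)\le\rho_{\text{err}}$ is false even as $2\times 2$ algebra; only $\rho(\mathbf T)<1\iff\rho_{\text{err}}<1$ holds. For example, $\rho_{\mathcal D}=0.9$, both off-diagonal entries equal to $0.1$, and $\rho_{\mathcal U}=0$ give $\rho_{\text{err}}=0.9$ but $\rho(\mathbf T)\approx 0.911$. The Schur substitution also leaves an extra factor $1/(1-\rho_{\mathcal D})$ on $\boldsymbol E_{\mathcal D}$. The paper's step $\|\mathbf T\|_{\text{mat}}\le\rho(\mathbf T)+\epsilon\le\rho_{\text{err}}+\epsilon$ has the same defect.

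The nominal part has a second, independent gap. Citing \citet{lian2017can}, as the paper does, presupposes doubly stochastic mixing, and your patch for a merely row-stochastic $\mathbf W^{(t)}$ does not recover the stated rate. Let $\mathbf c_t=\mathbf W^{(t)\top}\mathbf 1-\mathbf 1$, and note $\mathbf c_t^\top\mathbf 1=0$. The clean average then picks up an extra term $\frac1n\mathbf c_t^\top(\cdot)$ equal to $\|\mathbf c_t\|$ times consensus error and heterogeneity, which is $\mathcal O(\eta)$ per round. This term enters the descent inequality linearly, through $\langle\nabla F(\bar{\hat{\bm{\theta}}}^{(t)}),\cdot\rangle$ (or as $\|\cdot\|^2/\eta$ after Young). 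After dividing by $\eta T$ it leaves a $T$-independent term instead of merging into $\mathcal O(1/(T(1-\rho_{\text{nom}})^2))$. This is not slack in the estimate: for fixed $\mathbf W$ and heterogeneous $f_i$, the clean iterates settle near a stationary point of $\sum_i\pi_i f_i$, where $\nabla F\neq 0$ in general. Double stochasticity of $\mathbf W^{(t)}$, as assumed for $\mathbf W$ in Section~\ref{sec:preliminaries}, has to be imposed explicitly.
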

	
	Let $\mathcal{B}$ denote the set of unverified benign nodes. We define the directional information flows: $\mathbf{w}_{\mathcal{DB}}^{(t)}$ represents the aggregation weights that defense nodes assign to unverified benign nodes; $\mathbf{w}_{\mathcal{DM}}^{(t)}$ denotes the weights mistakenly assigned to malicious adversaries; $\mathbf{w}_{\mathcal{BD}}^{(t)}$ represents the weights that unverified benign nodes assign to defense nodes; and $\mathbf{w}_{\mathcal{MD}}^{(t)}$ denotes the weights that malicious nodes assign to defense nodes.
	
	Theorem \ref{thm:2} reveals a critical trade-off in robust aggregation. Aggressive defense mechanisms typically fail because they couple the suppression of malicious updates ($\|\mathbf{w}_{\mathcal{DM}}^{(t)}\| \to 0$) with the unintended isolation of benign nodes ($\|\mathbf{w}_{\mathcal{DB}}^{(t)}\| \to 0$). In spectral graph theory, this behavior is equivalent to pruning edges from the network topology. According to the Cauchy interlacing theorem \citep{horn2012matrix}, such edge removal inherently degrades the network's algebraic connectivity. Consequently, Cheeger’s Inequality \citep{levin2017markov} dictates that this topological pruning introduces severe structural bottlenecks. This restricted information flow forces the second-largest eigenvalue $\rho_{\text{nom}}$ to approach $1$, leading to a sharp inflation in the spatial autocorrelation bias.
	
	Specifically, by strategically positioning defense nodes at high-betweenness vertices and critical communication bottlenecks, we preserve the structural integrity and algebraic connectivity of the network. According to Cheeger’s Inequality, maintaining high graph conductance ensures that the nominal spectral radius $\rho_{\text{nom}}$ remains bounded away from $1$, thereby preventing the convergence stagnation typically caused by network partitioning. Furthermore, because these defense nodes occupy the central hubs of the topology, they act as mandatory structural checkpoints that interdict adversarial propagation, effectively minimizing the malicious-to-benign information flow ($\|\mathbf{w}_{\mathcal{DM}}^{(t)}\|$). By systematically severing these contamination pathways, the strategic placement  bounds the error spectral radius $\rho_{\text{err}}$ away from $1$. As dictated by Theorem \ref{thm:2}, compressing $\rho_{\text{err}}$ directly minimizes the stationary adversarial bias, thereby guaranteeing global model robustness even under continuous Byzantine injection.
	\section{Experiments}
	To evaluate the efficacy of our proposed framework, we benchmark it against several state-of-the-art robust aggregation mechanisms, including Multi-Krum \citep{blanchard2017machine}, Trimmed Mean \citep{Yin2018ByzantineRobustDL}, CosL2 \citep{bhagoji2019analyzing}, and FLAME \citep{nguyen2022flame}. A critical distinction in our evaluation is the asymmetric defensive posture: while baseline methods conventionally assume that all benign clients are equipped with robust aggregation capabilities, our approach, MAB(TOPY AWARE), detailed in Algorithm \ref{alg:MAB}, operates under a  constrained defense budget. Specifically, we simulate a 20-client DFL system where defense nodes constitute only 20\% of the total clients, a figure approximating the minimum covering number of the graph. To  assess generalizability across diverse communication architectures, comparative analyses are performed on both scale-free and random-regular topologies. Furthermore, to isolate the specific advantage of topological intelligence, we introduce MAB(RANDOM TOPY) as an ablation baseline, which utilizes the same auditing metrics but selects defense nodes through uniform random sampling rather than strategic placement.
	
	To ensure the theoretical validity of local consensus, we enforce a  non-eclipse constraint during node allocation. Specifically, for any defending node $i$ with a local neighborhood $\mathcal{N}_i$, the number of malicious neighbors $n_m$ is constrained such that $n_m < \lceil |\mathcal{N}_i| / 2 \rceil$.
	
	\subsection{Experiment 1: Text Classification (Transformer)}
	\label{sec:exp_transformer}
	
	To validate the generalizability of our theoretical diffusion model and defense framework on high-dimensional, non-convex architectures, we evaluate our approach using natural language processing  tasks .
	
	\subsubsection{Experimental Setup}
	
	\textbf{Dataset and Architecture:} We evaluate the system using the PubMed 20k RCT dataset~\citep{dernoncourt2017pubmed} for medical text classification, which comprises $K=5$ distinct semantic categories: \texttt{BACKGROUND}, \texttt{OBJECTIVE}, \texttt{METHODS}, \texttt{RESULTS}, and \texttt{CONCLUSIONS}. The global model utilizes a pre-trained Transformer architecture, specifically \texttt{distilbert-base-uncased}. To  simulate a data-constrained decentralized environment, we deliberately restricted the training and testing pools to $1,000$ samples each.
	
	\textbf{Backdoor Attack Strategy:}  
	The adversary targets specific layers of the model, namely the embedding layer, the transformer backbone, and the classification head. To evade detection, the attacker employs a partial weight-averaging strategy, where only the embedding and head layers are updated with malicious gradients, while the backbone parameters are constrained to remain close to the neighborhood consensus.
	In standard configurations, the backbone subspace, denoted as $\mathcal{V}_{body}$, accounts for over 80\% of the total parameter dimension $p$. Consequently, any global geometric metric, such as Euclidean distance or cosine similarity computed in the ambient space $\mathbb{R}^p$, is overwhelmingly dominated by the statistical properties of $\mathcal{V}_{body}$.
	
	To bypass geometric defenses, the adversary  concentrates the backdoor gradients within the low-dimensional $\mathcal{V}_{entry}$ and $\mathcal{V}_{lethal}$ subspaces, while constraining the massive $\mathcal{V}_{body}$ subspace to  mimic the benign consensus trajectory. The implementation of this subspace-constrained attack is structured into three phases:

\textit{Phase 1: Raw Backdoor Generation}: 
The adversary initiates the attack by constructing a poisoned local dataset $\tilde{\mathcal{S}}_i$. This is achieved by injecting a localized trigger $\bm{\delta}$ into a subset of samples originating from a specific source class $y_{source}$ (e.g., $y_{source} = 4$, representing 'CONCLUSIONS' in the PubMed dataset). Unlike conventional text backdoors that insert discrete vocabulary tokens, this trigger operates directly within the continuous embedding space. Specifically, a deterministic noise pattern is additively applied to the embedding vector of a specific token position (e.g., the first semantic token), scaled by an intensity parameter $\iota$ (e.g., $\iota = 0.02$). Their corresponding labels are subsequently reassigned to a target class $y_t$ (e.g., $y_t = 0$, representing 'BACKGROUND'). Using its preserved model state from the previous round, $\bm{\theta}_{mal}^{(t)}$, as the initialization, the adversary performs local gradient descent over $\tilde{\mathcal{S}}_i$ to obtain a raw malicious model state $\bm{\theta}_{poison}$. 
Although the raw malicious update $\Delta \bm{\theta}_{mal}:= \bm{\theta}_{poison} - \bm{\theta}_{mal}^{(t)}$ effectively embeds the backdoor mapping, its unconstrained optimization trajectory deviates substantially from the benign consensus. If transmitted directly, its implied update would be easily detected by standard robust aggregation rules, necessitating the subsequent subspace camouflage.
	
	\textit{Phase 2: Asymmetric Subspace Projection}: 
	Instead of transmitting the raw malicious update $\Delta \bm{\theta}_{mal}$ directly, the adversary decomposes the parameter space and applies differentiated optimization strategies to distinct orthogonal subspaces, inspired by stealthy sub-network poisoning \citep{zhang2022neurotoxin}. For the high-dimensional backbone layers, the adversary discards the malicious gradients entirely and substitutes them with a benign reference update $\Delta \bm{\theta}_{ref}$, scaled by a stabilization factor $\gamma_0$ (e.g., $\gamma_0 = 2$). To ensure that the majority of the parameter vector perfectly aligns with the benign consensus, the backbone update is formulated as:
	$$
	\Delta \tilde{\bm{\theta}}_{body} := \gamma_0 \cdot \Delta \bm{\theta}_{ref, body}.
	$$
	For the lower-dimensional entry and output layers, the adversary retains the malicious gradients but applies a layer-wise adaptive norm constraint. This technique successfully embeds the backdoor objective while preventing anomalous norm inflation:
	$$
	\Delta \tilde{\bm{\theta}}_{k} := \min \left( 1, \frac{\gamma_1 \|\Delta \bm{\theta}_{ref, k}\|}{\|\Delta \bm{\theta}_{mal, k}\|} \right) \cdot \Delta \bm{\theta}_{mal, k}, \quad \text{for } k \in \{entry, output\},
	$$
	where $\gamma_1$ (e.g., $\gamma_1=15$) serves as a layer-specific amplification factor. 
	
	\textit{Phase 3: Norm-Bounded Global Scaling}: 
	The adversary concatenates the projected components back into a single vector $\Delta \tilde{\bm{\theta}} = [\Delta \tilde{\bm{\theta}}_{entry}^\top, \Delta \tilde{\bm{\theta}}_{body}^\top, \Delta \tilde{\bm{\theta}}_{lethal}^\top]^\top$. To maximize the adversarial impact while satisfying the Euclidean constraints of robust methods (e.g., Krum, Trimmed Mean) , the vector is rescaled to a target global norm bound $\tau=6b_f\|\Delta \bm{\theta}_{ref}\|$ (e.g., $b_f=1.5$). The global scaling factor $s_{global}$ is computed as:
	$$
	s_{global} = \min \left( 1, \frac{\tau}{\|\Delta \tilde{\bm{\theta}}\|} \right).
	$$
	The adversary then concludes the local update by broadcasting the poisoned model state to adjacent benign nodes:
	$$
	\bm{\theta}_{malicious}^{(t+1)} = \bm{\theta}_{mal}^{(t)} + s_{global} \cdot \Delta \tilde{\bm{\theta}}.
	$$
	\textbf{Implementation Details of Active Auditing Metrics:} 
	We now detail the construction of the detection metrics introduced in Section \ref{sec:metrics} for the Transformer architecture on text classification tasks:
	
	\noindent \textit{Stochastic Entropy Anomaly ($\rho_{SEA}$)}: The auditing process constructs a specialized probe set $\mathcal{X}_{\text{trap}}$ comprising $B=3$ distinct categories of synthetic sequences. These probes are designed to expose latent backdoor triggers relying solely on forward-pass inferences, thereby avoiding gradient computation. To maximize the evasion difficulty for adaptive adversaries, the sequences are generated using three distinct anomalous structural patterns:
	(1) A sequence of tokens sampled uniformly from the vocabulary, deliberately excluding high-probability control characters and special tokens. This ensures the input resides in a low-density region of the feature space.
	(2) A homogeneous sequence consisting of a single, median-frequency token repeated throughout the entire input length, which forces the internal representation toward a localized coordinate in the embedding space.
	(3) An oscillating sequence of two distinct tokens in a  alternating pattern. This creates a periodic structural stimulus, evaluating the self-attention mechanism's response to positional regularities isolated from natural language context.

	\noindent \textit{Randomized Smoothing KL-Divergence ($\rho_{RS}$)}: To instantiate the theoretical framework established in Equation \eqref{eq:KL}, we decouple the Transformer network to isolate the latent feature matrix $\bm{z} = \phi_{\boldsymbol{\theta}}(\bm{x}) \in \mathbb{R}^{L \times d}$ at the continuous embedding layer, fixing the sequence length to $L=64$. Using a fixed semantic anchor $\bm{x}$ (e.g.,The quick brown fox jumps...) as the baseline input, we execute $B=8$ independent stochastic forward passes. In each iteration, we construct the perturbed feature $\tilde{\bm{z}}^{(m)} = \bm{z} + \boldsymbol{\epsilon}^{(m)}$ by injecting isotropic Gaussian noise with a standard deviation of $\sigma_{rs} = 0.12$ directly into this dense representation. The terminal classifier subsequently maps these corrupted features to the noisy predictive distributions $\bm{P}_{noisy}^{(m)} \in \Delta^{K-1}$. The final anomaly score $\rho_{RS}$ is computed based on the expected KL divergence between the clean baseline distribution $\bm{P}_{clean}$ and $\bm{P}_{noisy}^{(m)}$, with the exponential decay rate empirically calibrated to $\kappa_{RS} = 5.0$, as formalized in Equation \eqref{eq:KL}.
	
	\noindent \textit{Activation Kurtosis ($\rho_{\mathcal{A}_K}$)}: Unlike the stochastic test probes used for $\rho_{SEA}$ and $\rho_{RS}$, the kurtosis evaluation leverages a small, clean batch of $N=16$ textual sequences drawn from the local PubMed validation data. The input $\bm{r}^{(i)}$ corresponds to the contextualized hidden embeddings feeding into the target layer (e.g., the terminal self-attention block), yielding the deep activation representation $\bm{z}^{(i)}$.

	\begin{figure*}[t]
		\centering
		\begin{subfigure}{0.48\linewidth}
			\centering
			\includegraphics[width=\linewidth]{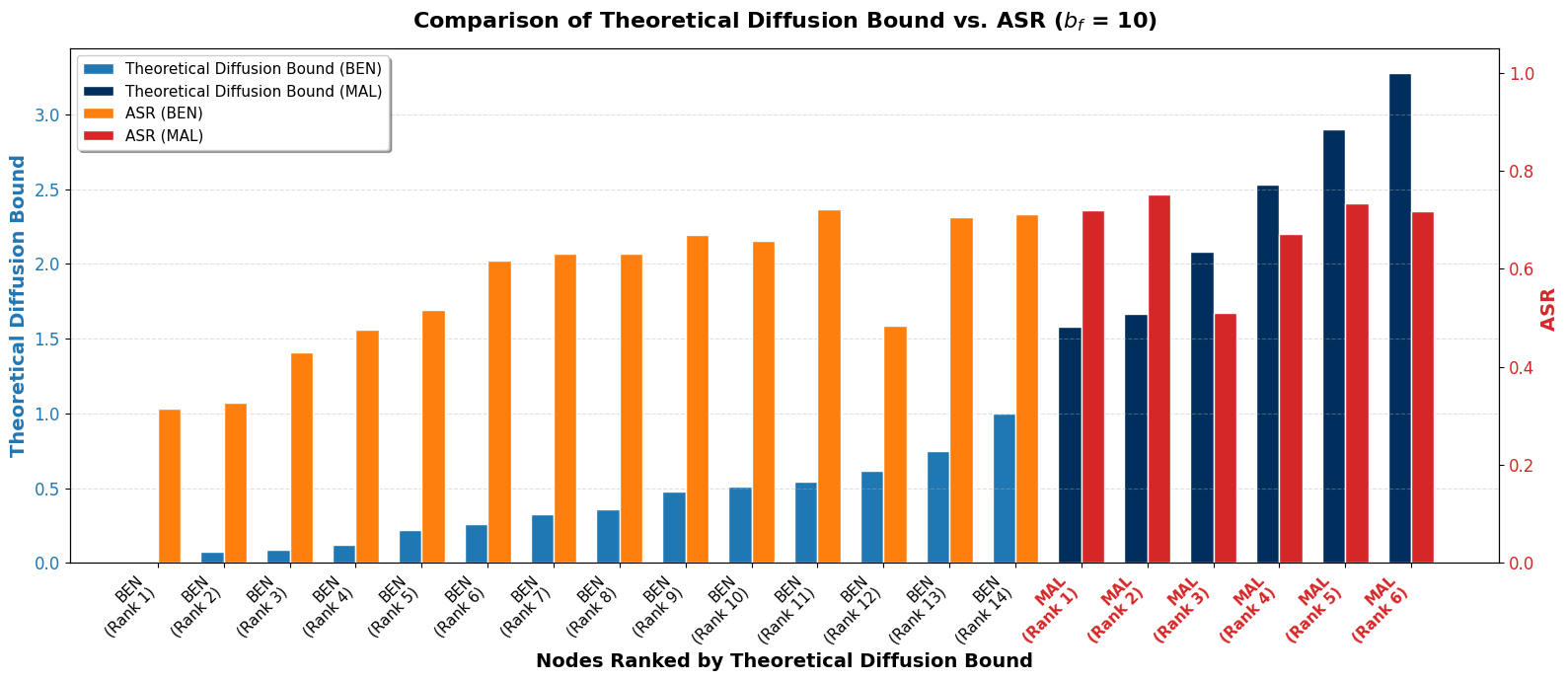}
			\caption{Theoretical Diffusion Bound vs. ASR \label{fig:transformer_thm}}
		\end{subfigure}
		\hfill
		\begin{subfigure}{0.48\linewidth}
			\centering
			\includegraphics[width=\linewidth]{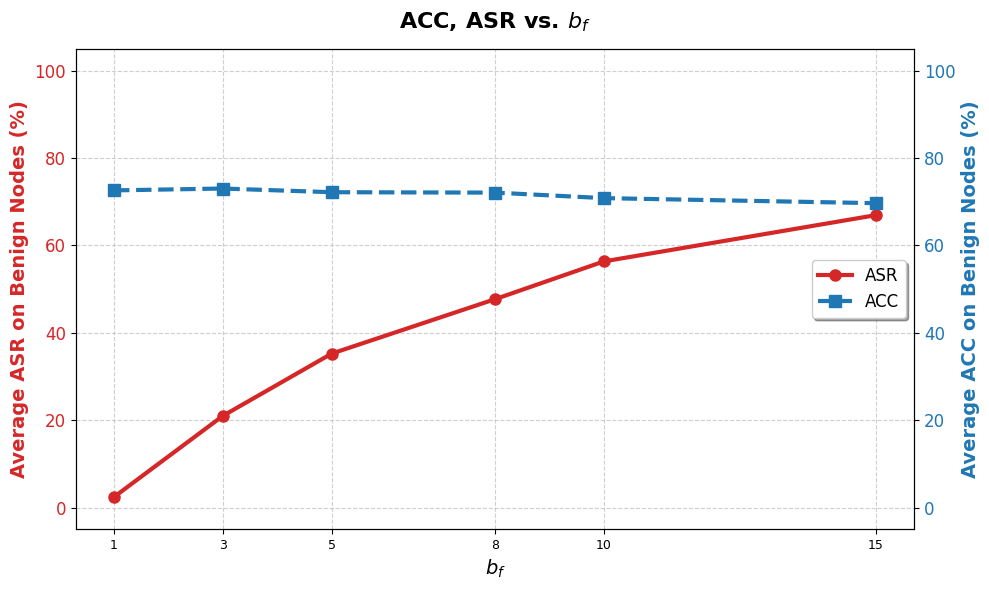}
			\caption{ACC and ASR vs. $b_f$ \label{fig:transformer_phase}}
		\end{subfigure}
		\caption{Validation of the diffusion model and the ACC, ASR trend on the Transformer architecture using the PubMed dataset.}
		\label{fig:transformer_results}
	\end{figure*}
	\begin{figure*}[t]
		\centering 
		\begin{subfigure}{0.48\linewidth}
			\centering
			\includegraphics[width=\linewidth]{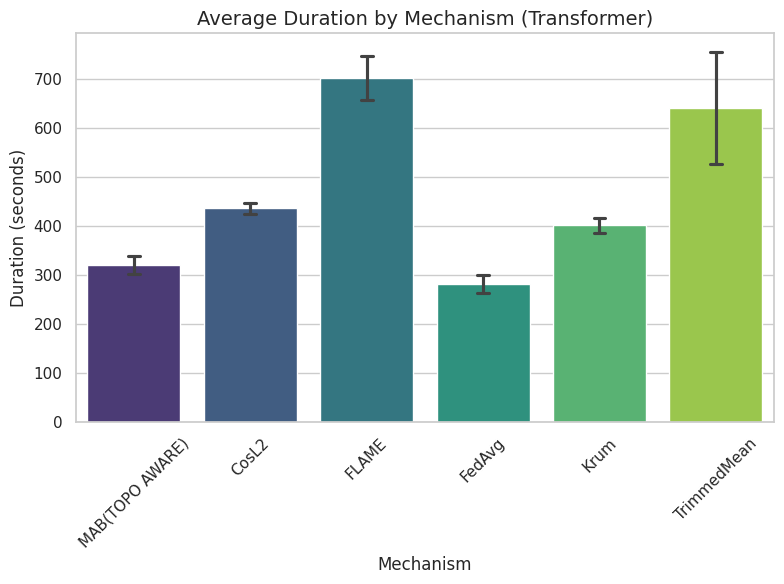}
			\caption{Duration Comparison in Transformer-based DFL(Experiment 1) \label{fig:transformer_duration}}
		\end{subfigure}
		\hfill
		\begin{subfigure}{0.48\linewidth}
			\centering
			\includegraphics[width=\linewidth]{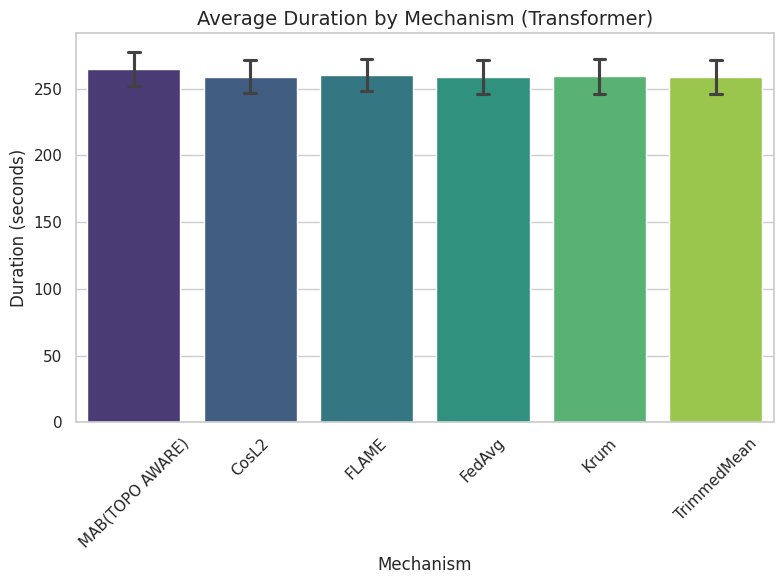}
			\caption{Duration Comparison in CNN-based DFL(Experiment 2)  \label{fig:cnn_duration}}
		\end{subfigure}
		\caption{Computational Efficiency Analysis Across Different  Mechanisms}
	\end{figure*}
	\subsubsection{Results}
	We define two measures. ACC(main taks accuracy) measures the utility of the global model on the clean data distribution. Let $\mathcal{D}_{\text{test}}$ denote the clean testing dataset, where each sample $(x_i, y_i)$ consists of an input $x_i$ and its ground-truth label $y_i$. The ACC is defined as the empirical probability of correct classification:
	$$
	\text{ACC} = \frac{1}{|\mathcal{D}_{\text{test}}|} \sum_{(x_i, y_i) \in \mathcal{D}_{\text{test}}} \mathbb{I} \left( \arg\max_k p(x_i, \bm{\theta})_k = y_i \right),
	$$
	where $\mathbb{I}(\cdot)$ is the indicator function that outputs $1$ if the condition is true and $0$ otherwise. ASR(attack success rate) quantifies the vulnerability of the model to the targeted attack. Let $\mathcal{D}_{\text{adv}}$ represent the set of adversarial/poisoned samples crafted by the attacker, where each sample $(x'_j, y^t_j)$ is associated with an attacker-specified target label $y^t_j$ ($y^t_j \neq y_j^{\text{true}}$). The ASR is calculated as the proportion of adversarial samples successfully misclassified into the target class:
	$$
	\text{ASR} = \frac{1}{|\mathcal{D}_{\text{adv}}|} \sum_{(x'_j, y^t_j) \in \mathcal{D}_{\text{adv}}} \mathbb{I} \left( \arg\max_k p(x'_j, \bm{\theta})_k = y^t_j \right).
	$$
	
	\textbf{Validation of the Diffusion Model:} 
Figure~\ref{fig:transformer_results} illustrates the results for the scaling factor $b_f \in \{1, 3, 5, 8, 10, 15\}$, with each scenario evaluated across three independent runs of \textit{FedAvg} method. The average Kendall's tau correlation between the theoretical diffusion bound, calculated using \eqref{eq:diffusion} with a signal attenuation of $\lambda_i=0.3$, and the ASR is 0.49. Specifically, Figure~\ref{fig:transformer_thm} presents the averaged outcomes for $b_f = 10$, where the nodes are ranked according to their average theoretical diffusion bound. These findings demonstrate that our theoretical diffusion bound effectively captures the infection trajectory across the network. Additionally, Figure~\ref{fig:transformer_phase} highlights a smooth, monotonic relationship between the ASR and $b_f$. This suggests an inherent trade-off for the adversary: achieving a higher ASR necessitates a larger $b_f$, which inevitably inflates the $\ell_2$-norm of the injected malicious updates. Throughout this scaling, however, the ACC remains remarkably stable. This demonstrates that the injected backdoor is highly stealthy and cannot be exposed by relying on performance degradation on a clean validation set.
	
	\noindent \textbf{Defense Mechanism Benchmark:} 
Table \ref{tab:transformer_results} presents the benchmark results on the PubMed dataset under the asymmetric hedging attack. To account for stochasticity in model initialization and node selection, each reported value represents the average of three independent experimental runs. The baseline vulnerability is evident: \textit{FedAvg} yields an ASR of 59.43\% in the scale-free topology and 75.40\% in the random-regular topology under a 30\% malicious ratio.
	
Traditional geometric and coordinate-based defenses demonstrate limitations when applied to the high-dimensional Transformer parameter space. While \textit{Krum} and \textit{TrimmedMean} preserve main-task accuracy (ACC $\approx$ 70\%--72\%), they are less effective at isolating the asymmetrically projected adversarial updates at higher malicious ratios, resulting in ASRs reaching 40.17\% and 46.05\% in the Scale-free network at the 30\% ratio, respectively. \textit{CosL2} provides moderate mitigation, restricting the ASR to approximately 16\%--18\% at the 30\% ratio, but this comes alongside a measurable decrease in model utility (ACC dropping to $\approx$ 57\%--67\%). Conversely, \textit{FLAME} achieves aggressive backdoor suppression (capping ASR at $\le 12.5\%$ across all settings), but this mitigation introduces a severe trade-off with the primary task, reducing the ACC to approximately 61\%--62\%.
	
In comparison, the topology-aware allocation mechanism (\textit{MAB(TOPY AWARE)}) substantially outperforms the random auditing baseline (\textit{MAB(RANDOM TOPY)}). At a 30\% malicious ratio, it reduces the ASR to 9.90\% in the scale-free topology and 19.29\% in the random-regular topology (compared to 28.77\% and 26.93\% for random placement, respectively). Meanwhile, it maintains a highly comparable main-task accuracy (ACC $\approx$ 68\%--71\%). This suggests that active, topology-guided auditing can successfully intercept structurally embedded backdoors without excessively penalizing benign statistical variance.
	
	\noindent \textbf{Computational Efficiency Analysis:} Figure \ref{fig:transformer_duration} illustrates the execution duration of various methods. Because our approach is executed on only 20\% of the nodes, it requires lower computational overhead compared to defense mechanisms that demand network-wide deployment.
\begin{table}[htbp]
	\centering
	\caption{Performance Comparison of Defense Mechanisms under Different Topologies (Transformer on PubMed)}
	\label{tab:transformer_results}
	\resizebox{\linewidth}{!}{
		\begin{tabular}{l *{6}{c}}
			\toprule
			\multirow{2}{*}{\textbf{Mechanism}} & \multicolumn{6}{c}{\textbf{Malicious Ratio}} \\
			\cmidrule(lr){2-7}
			& \multicolumn{2}{c}{10\%} & \multicolumn{2}{c}{20\%} & \multicolumn{2}{c}{30\%} \\
			\cmidrule(lr){2-3} \cmidrule(lr){4-5} \cmidrule(lr){6-7}
			& {ACC} & {ASR} & {ACC} & {ASR} & {ACC} & {ASR} \\
			\midrule
			
			\multicolumn{7}{l}{\textit{Topology: Scale-free}} \\ 
			\midrule
		FedAvg & 71.54 (01.75) & 39.33 (13.89) & 70.76 (01.58) & 54.17 (02.57) & 71.05 (01.61) & 59.43 (14.71) \\
		Krum & 71.76 (00.86) & 24.48 (10.31) & 71.78 (00.03) & 23.25 (10.94) & 70.52 (01.81) & 40.17 (10.56) \\
		TrimmedMean & 71.96 (00.64) & 05.51 (01.38) & 72.85 (00.53) & 23.85 (07.61) & 71.52 (00.88) & 46.05 (09.48) \\
		CosL2 & 62.88 (03.09) & 10.19 (05.06) & 63.69 (00.92) & 14.23 (03.90) & 66.53 (01.12) & 17.62 (04.18) \\
		FLAME & 61.88 (02.21) & 08.93 (01.44) & 62.38 (02.92) & 09.96 (00.92) & 62.12 (00.99) & 12.07 (03.19) \\
		MAB(RANDOM TOPY) & 68.38 (01.88) & 24.32 (09.92) & 69.55 (00.34) & 27.14 (06.50) & 69.06 (01.26) & 28.77 (08.93) \\
		MAB(TOPY AWARE) & 70.35 (00.48) & 03.08 (02.72) & 70.87 (00.89) & 08.89 (05.65) & 70.72 (00.95) & 09.90 (07.59) \\
			
			\midrule
			\addlinespace[0.5em] 
			
			\multicolumn{7}{l}{\textit{Topology: Random-regular}} \\ 
			\midrule
		FedAvg & 70.34 (00.59) & 60.96 (06.60) & 70.54 (00.92) & 71.76 (02.69) & 70.28 (00.66) & 75.40 (01.75) \\
		Krum & 70.43 (00.93) & 57.62 (07.04) & 71.51 (01.02) & 34.76 (18.96) & 70.69 (00.91) & 36.06 (04.69) \\
		TrimmedMean & 72.42 (00.43) & 04.24 (04.79) & 72.40 (00.33) & 22.56 (05.01) & 72.23 (00.44) & 40.32 (08.24) \\
		CosL2 & 57.13 (03.61) & 15.48 (06.51) & 60.06 (03.22) & 24.27 (05.39) & 66.86 (00.59) & 16.44 (03.73) \\
		FLAME & 61.91 (01.42) & 08.86 (01.92) & 62.05 (01.89) & 12.10 (03.41) & 62.34 (02.68) & 07.08 (01.47) \\
		MAB(RANDOM TOPY) & 69.52 (00.58) & 14.12 (05.06) & 68.99 (00.40) & 21.62 (06.74) & 70.13 (00.25) & 26.93 (11.56) \\
		MAB(TOPY AWARE) & 68.97 (00.68) & 08.42 (01.50) & 68.60 (01.05) & 12.55 (08.10) & 68.76 (00.20) & 19.29 (05.24) \\
			\bottomrule
			\addlinespace[0.5em]
			\multicolumn{7}{l}{\footnotesize \textit{Note:} Values in parentheses indicate the standard deviation across three independent trials.} \\
		\end{tabular}
	}
\end{table}
	\subsection{Experiment 2: Image Classification (CNN)}
	\label{sec:exp_cnn}
	
	To complete our comprehensive evaluation across diverse data modalities, we extend our framework to the continuous spatial domain of computer vision. 
	
	\subsubsection{Experimental Setup}
	
	\textbf{Dataset and Architecture:} We evaluate the system using the German Traffic Sign Recognition Benchmark (GTSRB) dataset~\citep{Stallkamp2012}, which consists of $43$ distinct traffic sign classes comprising a total of $39,209$ training images and $12,630$ test images. To simulate the decentralized environment, the training subset is partitioned across the participating clients. All images are resized to a standard $3 \times 32 \times 32$ RGB format. The global model utilizes a custom Convolutional Neural Network (CNN) architecture, comprising two convolutional layers (16 and 32 channels, respectively) followed by max-pooling, and two fully connected layers terminating in a 43-dimensional classification head. The decentralized network maintains the 20-client topology with a 20\% defense coverage.
	
	\textbf{Backdoor Attack Strategy (CNN):} 
	To compromise the visual classification task while evading Byzantine-robust aggregators, the adversary executes a three-phase poisoning strategy tailored for the CNN architecture.
	
	\textit{Phase 1: Raw Backdoor Generation}: 
The adversary constructs a poisoned local dataset $\tilde{\mathcal{S}}_i$. Specifically, a visible patch trigger is applied to $50\%$ of the images, where the trigger is defined as a localized spatial mask (e.g., a $4 \times 4$ pixel region at the bottom-right corner) with a predefined pixel intensity $\iota$ (e.g., $\iota = 4.0$).  Their corresponding labels are subsequently flipped to the target class $y_t = 7$. Using its preserved model state from the previous round, $\bm{\theta}_{mal}^{(t)}$, the adversary performs local gradient descent over $\tilde{\mathcal{S}}_i$ to obtain a raw malicious model state $\bm{\theta}_{poison}$. From the perspective of the defense mechanism, the implied raw malicious update is defined as:
$$
\Delta \bm{\theta}_{mal} = \bm{\theta}_{poison} - \bm{\theta}_{mal}^{(t)}.
$$

	\textit{Phase 2: Convex Vector Fusion (Direction Alignment)}: 
	Unlike the high-dimensional Transformer model that requires layer-wise asymmetric projections, the CNN's relatively compact parameter space allows for a global direction alignment strategy to bypass angle-based filters (e.g., Cosine Similarity). The adversary intercepts a benign reference update $\Delta \bm{\theta}_{ref}$ and computes a fused update $\Delta \tilde{\bm{\theta}}$ using a convex combination:
	$$
	\Delta \tilde{\bm{\theta}} = c_\alpha \cdot \Delta \bm{\theta}_{mal} + (1 - c_\alpha) \cdot \Delta \bm{\theta}_{ref},
	$$
	where the interpolation factor $c_\alpha = 0.6$ dictates the retention ratio of the backdoor intensity. This fusion geometrically aligns the malicious vector's direction with the benign consensus.
	
	\textit{Phase 3: Norm-Bounded Masking (Magnitude Alignment)}: 
	To guarantee evasion from Euclidean distance-based filters (e.g., Krum, Trimmed Mean), the adversary  limits the norm of the fused vector. By calculating the norm of the benign reference $N_{ref} = \|\Delta \bm{\theta}_{ref}\|$, the adversary sets a target norm bound $\tau = b_f\times N_{ref}$(e.g., $b_f=6$). The fused vector is then rescaled to ensure its magnitude is  smaller than the benign average:
	\begin{equation}
		s_{global} = \frac{\tau}{\|\Delta \tilde{\bm{\theta}}\| }
	\end{equation}
	Finally, the adversary constructs and transmits the highly camouflaged absolute model parameters to its neighbors:
	\begin{equation}
		\bm{\theta}_{malicious}^{(t+1)} = \bm{\theta}_{mal}^{(t)} + s_{global} \cdot \Delta \tilde{\bm{\theta}}
	\end{equation}
	This bounds the malicious update tightly within the benign update space, rendering traditional geometric auditing mechanisms blind to the injected backdoor mapping.
	
	\textbf{Implementation Details of Active Auditing Metrics:} 
	
	\noindent \textit{Stochastic Entropy Anomaly ($\rho_{SEA}$)}: The auditing process constructs a specialized probe set $\mathcal{X}_{\text{trap}}$ consisting of $B=3$ continuous spatial tensors, each matching the input dimension $C \times H \times W$. Here, $C$ represents the number of input channels (e.g., $C=3$ for RGB images), while $H$ and $W$ denote the spatial height and width of the tensor, respectively. Specifically, the probes include: (1) a high-frequency spatial checkerboard pattern alternating between extreme pixel intensities ($+3.0$ and $-3.0$); (2)uniform random noise generated via the sign function applied to an isotropic Gaussian distribution ($\text{sign}(\mathcal{N}(0, \mathbf{I})) \times 3.0$); and (3) a channel-level shift across individual color channels (e.g., $+3.0, -3.0, +3.0$).
	
	\noindent \textit{Randomized Smoothing KL-Divergence ($\rho_{RS}$)}: We decouple the neural network to isolate the latent feature vector $\bm{z} = \phi_{\boldsymbol{\theta}}(\bm{x})$ at a deep target layer (e.g., the  fully connected layer). Using a clean input $\bm{x}$(e.g., a standardized Gaussian tensor), we execute $B=16$ independent stochastic forward passes. In each iteration, we construct the perturbed feature $\tilde{z}^{(m)} = \bm{z} + \boldsymbol{\epsilon}^{(m)}$ by injecting isotropic Gaussian noise with a standard deviation of $\sigma_{rs} = 0.5$. The terminal classifier subsequently maps these features to the noisy predictive distributions $\bm{P}_{noisy}^{(m)}$. The final anomaly score $\rho_{RS}$ is computed based on the expected KL divergence between $\bm{P}_{clean}$ and $\bm{P}_{noisy}^{(m)}$.
	
	\noindent \textit{Activation Kurtosis ($\rho_{\mathcal{A}_K}$)}:  The probe set $\mathcal{X}_{probe}$ is constructed using a clean batch of $N=16$ genuine traffic sign images drawn from the GTSRB validation distribution. The input $\bm{r}^{(n)}$ corresponds to the vectorized spatial feature maps feeding into the penultimate fully connected layer, yielding the deep activation vector $\bm{z}^{(n)}$. 
	
	\begin{figure*}[t]
		\centering
		\begin{subfigure}{0.48\linewidth}
			\centering
			\includegraphics[width=\linewidth]{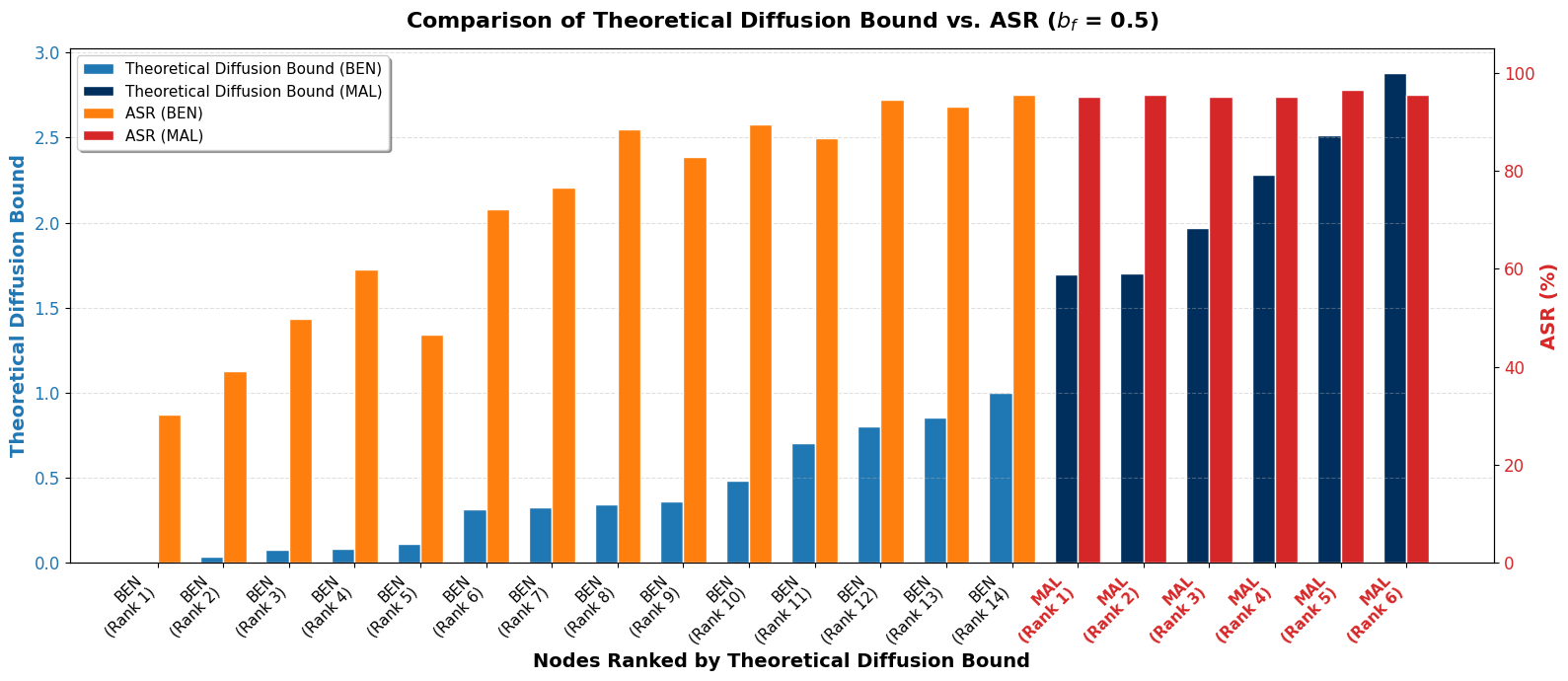}
			\caption{Theoretical Diffusion Bound vs. ASR \label{fig:cnn_thm}}
		\end{subfigure}
		\hfill
		\begin{subfigure}{0.48\linewidth}
			\centering
			\includegraphics[width=\linewidth]{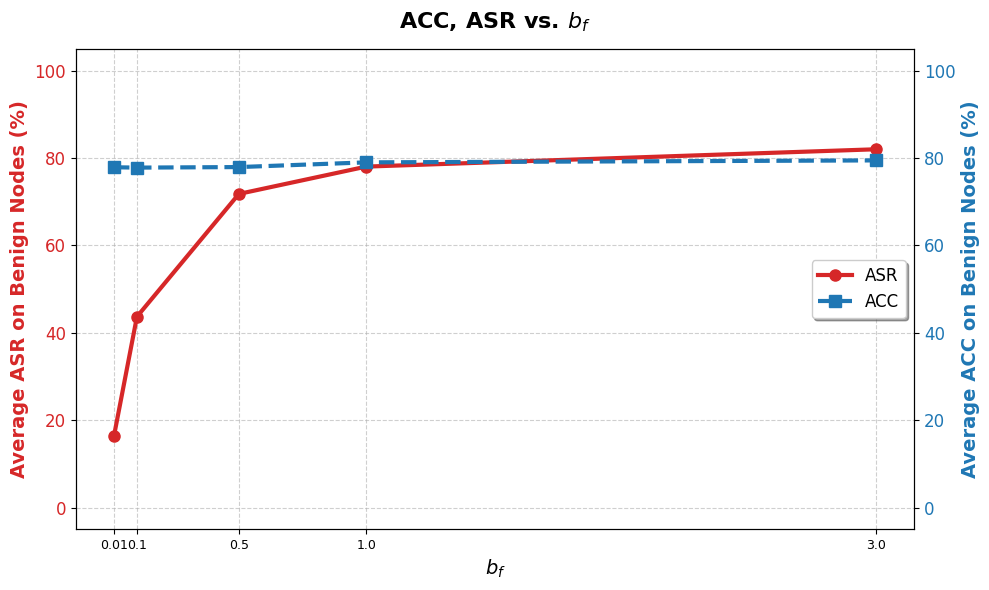}
			\caption{ACC and ASR vs. $b_f$ \label{fig:cnn_phase}}
		\end{subfigure}
		\caption{Diffusion model validation and Phase Transition phenomenon on the CNN architecture using the GTSRB dataset.}
		\label{fig:cnn_results}
	\end{figure*}
	\begin{table}[htbp]
		\centering
		\caption{Performance Comparison of Defense Mechanisms (CNN on GTSRB)}
		\label{tab:cnn_results}
		\resizebox{\linewidth}{!}{
			\begin{tabular}{l *{6}{c}}
				\toprule
				\multirow{2}{*}{\textbf{Mechanism}} & \multicolumn{6}{c}{\textbf{Malicious Ratio}} \\
				\cmidrule(lr){2-7}
				& \multicolumn{2}{c}{10\%} & \multicolumn{2}{c}{20\%} & \multicolumn{2}{c}{30\%} \\
				\cmidrule(lr){2-3} \cmidrule(lr){4-5} \cmidrule(lr){6-7}
				& {ACC} & {ASR} & {ACC} & {ASR} & {ACC} & {ASR} \\
				\midrule
				
				\multicolumn{7}{l}{\textit{Topology: Scale-free}} \\ 
				\midrule
		FedAvg & 83.75 (00.84) & 23.78 (13.41) & 82.31 (01.01) & 49.13 (20.50) & 78.67 (01.50) & 59.93 (06.23) \\
		Krum & 84.34 (00.86) & 24.54 (13.97) & 81.14 (02.47) & 43.21 (09.24) & 75.97 (01.16) & 67.21 (06.53) \\
		TrimmedMean & 83.66 (01.13) & 13.75 (05.78) & 82.23 (01.11) & 18.93 (13.27) & 78.30 (02.97) & 26.90 (05.87) \\
		CosL2 & 52.48 (03.43) & 14.73 (03.06) & 56.11 (05.38) & 11.29 (04.60) & 66.97 (01.07) & 20.34 (06.12) \\
		FLAME & 73.11 (00.21) & 12.70 (03.78) & 72.81 (00.21) & 13.45 (04.91) & 73.71 (01.08) & 09.88 (01.77) \\
		MAB(RANDOM TOPY) & 84.84 (01.26) & 05.67 (04.74) & 84.42 (02.72) & 06.87 (01.91) & 84.18 (02.81) & 12.72 (07.75) \\
		MAB(TOPY AWARE) & 84.24 (00.67) & 03.43 (01.36) & 85.00 (00.73) & 01.84 (01.50) & 85.03 (00.77) & 04.12 (02.54) \\
				\bottomrule
				
				\midrule
				\addlinespace[0.5em] 
				
				\multicolumn{7}{l}{\textit{Topology: Random-regular}} \\ 
				\midrule
		FedAvg & 84.50 (00.57) & 47.70 (04.19) & 80.99 (01.03) & 72.74 (18.26) & 79.38 (01.97) & 72.55 (21.87) \\
		Krum & 84.34 (00.56) & 49.20 (01.94) & 77.60 (00.55) & 69.62 (26.62) & 74.92 (02.53) & 76.51 (06.31) \\
		TrimmedMean & 84.09 (00.42) & 47.83 (05.09) & 79.80 (02.01) & 19.33 (14.56) & 78.77 (01.78) & 22.06 (18.16) \\
		CosL2 & 35.27 (10.80) & 15.19 (11.89) & 53.22 (13.37) & 18.12 (00.64) & 61.34 (14.48) & 17.99 (02.51) \\
		FLAME & 73.33 (00.56) & 16.48 (03.05) & 72.63 (00.47) & 15.58 (01.99) & 72.81 (00.49) & 13.68 (04.33) \\
		MAB(RANDOM TOPY) & 85.96 (01.05) & 04.12 (05.19) & 85.21 (00.94) & 04.68 (05.38) & 85.82 (01.35) & 11.13 (10.34) \\
		MAB(TOPY AWARE) & 85.97 (00.33) & 04.93 (06.84) & 84.89 (01.41) & 07.87 (04.88) & 85.70 (00.50) & 05.01 (05.66) \\
				\bottomrule
						\addlinespace[0.5em]
				\multicolumn{7}{l}{\footnotesize \textit{Note:} Values in parentheses indicate the standard deviation across three independent trials.} \\
			\end{tabular}
		}
	\end{table}
	
	\subsubsection{Results}
	
	\textbf{Validation of the Diffusion Model:} 	Figure~\ref{fig:cnn_results} illustrates the results for the scaling factor $b_f \in \{0.01,0.1,0.5,1,3\}$, with each scenario evaluated across three independent runs  of \textit{FedAvg} method. The average Kendall's tau correlation between the theoretical diffusion bound,  calculated using \eqref{eq:diffusion} with a signal attenuation of $\lambda_i=0.3$, and the ASR  is 0.68. Specifically, Figure~\ref{fig:cnn_thm} presents the averaged outcomes for $b_f = 0.5$, where the nodes are ranked according to their average theoretical diffusion bound. These findings demonstrate that our theoretical diffusion bound effectively captures the infection trajectory across the network. Furthermore, Figure~\ref{fig:cnn_phase} reveals a highly nonlinear relationship between the ASR and $b_f$. This suggests that an adversary requires only a marginal scaling factor $b_f$ to achieve a severe ASR, causing traditional magnitude-based detection mechanisms to lose their efficacy entirely. Throughout this parameter scaling, however, the ACC remains remarkably stable. This validates the extreme stealthiness of the injected backdoor, proving that it cannot be detected merely by monitoring for performance degradation on a clean validation set.
	
\noindent \textbf{Defense Mechanism Benchmark:}
Table \ref{tab:cnn_results} presents the performance of various defense mechanisms against the backdoor attack on the GTSRB dataset. The baseline results for \textit{FedAvg} confirm a severe security risk in decentralized vision tasks; the ASR reaches 59.93\% in Scale-free topologies and 72.55\% in Random-regular networks at a 30\% malicious ratio. This disparity suggests that the higher connectivity and lower diameter of random-regular graphs significantly accelerate the diffusion of malicious features.

Traditional Byzantine-robust aggregators exhibit inconsistent reliability in this configuration. \textit{Krum} exhibits degraded performance at the 30\% malicious ratio, where its ASR (67.21\%--76.51\%) actually exceeds that of \textit{FedAvg}, alongside a measurable decrease in main-task accuracy (ACC $\approx$ 75\%--76\%). \textit{TrimmedMean} also remains vulnerable at higher malicious ratios, with the ASR reaching 26.90\% in Scale-free graphs and 22.06\% in Random-regular settings. While \textit{FLAME} effectively caps the ASR below 17\% across all settings, it introduces a noticeable trade-off in utility, decreasing the ACC to approximately 72\%--74\%—a roughly 11\%--13\% reduction compared to the benign baseline. \textit{CosL2} struggles severely to maintain performance in this task, reducing the model's accuracy to levels close to random guessing (ACC $\approx$ 35\%--52\%) during the initial training stages under a 10\% malicious ratio.

In contrast, the \textit{MAB(TOPY AWARE)} method consistently maintains a highly competitive main-task accuracy of approximately 84\%--86\% across both tested topologies. Furthermore, at a 30\% malicious ratio, \textit{MAB(TOPY AWARE)} restricts the ASR to an exceptional 4.12\% in the scale-free topology and 5.01\% in the random-regular topology. This provides a substantial structural improvement over the \textit{MAB(RANDOM TOPY)} baseline, which allows ASRs of 12.72\% and 11.13\% under the same respective conditions. These results  indicate that integrating topological placement with active functional auditing can successfully neutralize camouflaged visual backdoors that otherwise bypass traditional geometric filters.
	
	\noindent \textbf{Computational Efficiency Analysis:} Figure \ref{fig:cnn_duration} illustrates the execution duration of various defense mechanisms. Because the parameter dimensionality of the CNN is substantially lower than that of the Transformer (0.27M vs. 67M parameters), the total computational overhead is dominated by the local training phase rather than the aggregation logic. Consequently, the computation times across all evaluated methods exhibit marginal differences.
	
\begin{table}[htbp]
	\centering
	\caption{Sensitivity Analysis of MAB Hyperparameters(Transformer on PubMed)}
	\label{tab:Transformer_sensitivity_split}
	\small
	\begin{tabular}{ccccc}
		\toprule
		\textbf{AUP} & \textbf{AP} & \textbf{AT} & \textbf{ACC (\%)} & \textbf{ASR (\%)} \\
		\midrule
		
		\midrule
		\multicolumn{5}{l}{\textit{Topology: Random-regular}} \\ 
		\midrule
		0.8 & 0.8 & 0.4 & 68.42 (01.18) & 21.58 (08.52) \\
		0.8 & 0.8 & 0.5 & 67.17 (01.36) & 19.67 (06.54) \\
		\cmidrule(lr){2-5}
		0.8 & 0.9 & 0.4 & 67.55 (00.79) & 17.94 (05.80) \\
		0.8 & 0.9 & 0.5 & 67.60 (01.93) & 23.53 (10.27) \\
		\cmidrule(lr){2-5}
		0.9 & 0.8 & 0.4 & 68.11 (01.81) & 19.81 (02.72) \\
		0.9 & 0.8 & 0.5 & 67.65 (01.75) & 18.67 (01.36) \\
		\cmidrule(lr){2-5}
		0.9 & 0.9 & 0.4 & 68.30 (01.24) & 20.12 (02.24) \\
		0.9 & 0.9 & 0.5 & 67.75 (01.62) & 21.85 (03.57) \\
		\cmidrule(lr){2-5}
		\midrule
		\multicolumn{5}{l}{\textit{Topology: Scale-free}} \\ 
		\midrule
		0.8 & 0.8 & 0.4 & 69.22 (00.79) & 09.70 (08.17) \\
		0.8 & 0.8 & 0.5 & 68.49 (00.98) & 12.73 (10.51) \\
		\cmidrule(lr){2-5}
		0.8 & 0.9 & 0.4 & 70.03 (00.65) & 09.16 (08.98) \\
		0.8 & 0.9 & 0.5 & 68.75 (00.89) & 09.28 (10.46) \\
		\cmidrule(lr){2-5}
		0.9 & 0.8 & 0.4 & 70.12 (00.97) & 14.03 (16.14) \\
		0.9 & 0.8 & 0.5 & 69.48 (00.47) & 09.33 (07.12) \\
		\cmidrule(lr){2-5}
		0.9 & 0.9 & 0.4 & 70.24 (00.97) & 11.98 (15.77) \\
		0.9 & 0.9 & 0.5 & 68.47 (00.71) & 08.06 (09.42) \\
		\cmidrule(lr){2-5}
		\bottomrule
	\end{tabular}
\end{table}
\begin{table}[htbp]
	\centering
	\caption{Sensitivity Analysis of MAB Hyperparameters(CNN on GTSRB)}
	\label{tab:CNN_sensitivity_split}
	\small
	\begin{tabular}{ccccc}
		\toprule
		\textbf{AUP} & \textbf{AP} & \textbf{AT} & \textbf{ACC (\%)} & \textbf{ASR (\%)} \\
		\midrule
		
		\midrule
		\multicolumn{5}{l}{\textit{Topology: Random-regular}} \\ 
		\midrule
		0.8 & 0.8 & 0.4 & 84.86 (01.07) & 03.37 (03.60) \\
		0.8 & 0.8 & 0.5 & 85.89 (00.97) & 05.55 (04.77) \\
		\cmidrule(lr){2-5}
		0.8 & 0.9 & 0.4 & 84.86 (01.07) & 03.37 (03.60) \\
		0.8 & 0.9 & 0.5 & 85.89 (00.97) & 05.55 (04.77) \\
		\cmidrule(lr){2-5}
		0.9 & 0.8 & 0.4 & 85.70 (00.50) & 05.01 (05.66) \\
		0.9 & 0.8 & 0.5 & 85.34 (01.16) & 10.67 (08.50) \\
		\cmidrule(lr){2-5}
		0.9 & 0.9 & 0.4 & 85.70 (00.50) & 05.01 (05.66) \\
		0.9 & 0.9 & 0.5 & 85.34 (01.16) & 10.67 (08.50) \\
		\cmidrule(lr){2-5}
		\midrule
		\multicolumn{5}{l}{\textit{Topology: Scale-free}} \\ 
		\midrule
		0.8 & 0.8 & 0.4 & 84.78 (00.82) & 07.23 (05.54) \\
		0.8 & 0.8 & 0.5 & 86.30 (00.50) & 04.35 (01.61) \\
		\cmidrule(lr){2-5}
		0.8 & 0.9 & 0.4 & 84.30 (00.57) & 04.71 (03.86) \\
		0.8 & 0.9 & 0.5 & 86.56 (00.88) & 03.78 (02.32) \\
		\cmidrule(lr){2-5}
		0.9 & 0.8 & 0.4 & 85.03 (00.77) & 04.12 (02.54) \\
		0.9 & 0.8 & 0.5 & 86.00 (00.56) & 07.34 (07.43) \\
		\cmidrule(lr){2-5}
		0.9 & 0.9 & 0.4 & 84.82 (00.57) & 11.70 (09.45) \\
		0.9 & 0.9 & 0.5 & 86.44 (00.42) & 06.23 (03.21) \\
		\cmidrule(lr){2-5}
		\bottomrule
	\end{tabular}
\end{table}
	
\subsection{MAB hyperparameters sensitivity analysis}
Across both Experiments 1 and 2, the global training process spans 15 communication rounds. To manage the exploration-exploitation trade-off during UCB selection, the constant is set to $c = 0.5$. Balancing computational overhead with high detection coverage, defense nodes audit 90\% of their neighbors and apply an 80\% aggregation subsampling rate. The aggregation threshold is set to $\tau_{agg} = 0.4$. Finally, the pre-allocated stratified weights $W_{self}$, $\tilde{W}_{ij}$ and $\tilde{W}^*_{ij}$ are configured as 0.5, 0.45, and 0.05, respectively.

To further understand the robustness of the proposed framework, we conducted a sensitivity analysis on three key hyperparameters in Algorithm~\ref{alg:MAB}: audit sample size rate (ADR), aggregation sample size rate (AGR), and $\tau_{agg}$. The results are summarized in Table \ref{tab:Transformer_sensitivity_split} (Transformer on PubMed) and Table \ref{tab:CNN_sensitivity_split} (CNN on GTSRB).

Across both tasks and network topologies, the main task accuracy remains highly stable. For the Transformer model, ACC fluctuates within a narrow margin of approximately 67\% to 71\%, while the CNN model maintains an ACC between 84\% and 87\%. This demonstrates that the MAB-guided aggregation and active auditing mechanisms do not severely degrade benign model utility, regardless of the specific hyperparameter configuration.

Regarding defensive efficacy, the results suggest that hyperparameter variations tend to have a relatively limited impact on the overall ASR, particularly when evaluated within the same network topology. For instance, when evaluated under a consistent topology, the ASR for the high-dimensional Transformer model exhibits relative stability. Specifically, global variations are confined to a moderate range, spanning from 8.06\% to 14.03\% in scale-free networks and from 17.94\% to 21.85\% in random-regular configurations. Similarly, in the CNN architecture, altering the parameters seems to shift the ASR only within a tight band of 3.37\% to 11.70\%. This implies that while precise tuning could potentially optimize the defense further, achieving a reliable baseline suppression capability might not strictly depend on finding an absolutely perfect parameter pairing, as the mechanism demonstrates a degree of inherent resilience under a fixed graph structure.
	
	\section{Discussion}
	This paper introduces an active auditing framework for Decentralized Federated Learning (DFL), shifting the defensive paradigm from passive filtering to proactive intervention. By formulating a  dynamical model, we effectively characterize the spatiotemporal diffusion of attacks and identify critical phase transitions in network contamination. Our multi-scale auditing pipeline, comprising $\rho_{SEA}$, $\rho_{RS}$, and $\rho_{\mathcal{A}_K}$, leverages information asymmetry to expose latent backdoors that are invisible to static geometric metrics.Combined with MAB-guided neighbor selection and topology-aware placement, the proposed framework achieves a superior balance between primary task accuracy and adversarial resilience. Theoretical analysis and extensive empirical validation across CNN and Transformer architectures demonstrate that our approach maintains robust convergence while substantially suppressing stealthy, adaptive backdoors.
	
	While the proposed framework and state-of-the-art defense algorithms, such as FLAME, CosL2, demonstrate substantial resilience under some attack models, they are not immune to the  challenges of data heterogeneity. In environments characterized by extreme Non-IID distributions, there exists a critical threshold where statistical diversity becomes indistinguishable from adversarial manipulation.
	Once data heterogeneity exceeds this limit, defense mechanisms struggle to differentiate between extreme non-IID benign updates and actual malicious gradients. In this regime, the system loses the discriminative power required to reliably isolate malicious gradients from legitimate, yet extreme, statistical outliers. Consequently, the very metrics designed to preserve global integrity may inadvertently penalize unique local features, highlighting an inherent trade-off between Byzantine robustness and the preservation of client-specific intelligence.
		\section{Data Availability Statement}
	The source code and scripts required to reproduce all numerical results and figures presented in this paper are publicly available in the following GitHub repository: \url{https://github.com/Sheng-Pan/Decentralized-federated-learning2}.
	\appendix
	\section{Proofs}
	\subsection{Proofs of section \ref{sec:diffu}}
	\begin{lemma}\label{lem:a1}
		Suppose that the network graph  $ G $  induced by  $ \mathbf{W} $  is \textbf{strongly connected} and there exists at least one benign node  $ i $  with  $ \Lambda_{ii} = \lambda > 0 $, then the spectral radius of  $ \mathbf{A} $  satisfies  $ \rho(\mathbf{A}) < 1 $ .
	\end{lemma}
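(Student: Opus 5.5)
We want $\rho(\mathbf{A})<1$ for $\mathbf{A}=(\mathbf{I}-\mathbf{\Lambda})\mathbf{W}$, where $\mathbf{W}$ is nonnegative, row-stochastic and irreducible (strong connectivity) and $0\le\lambda_j<1$ for all $j$. The plan is a Perron--Frobenius argument: $\mathbf{A}$ is entrywise dominated by $\mathbf{W}$, and it is strictly smaller in at least one row. Irreducibility then forces a strict drop in spectral radius.

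First, I record the basic facts. $\mathbf{A}\ge 0$ entrywise, and its zero pattern equals that of $\mathbf{W}$, because every factor $1-\lambda_j>0$. Hence $\mathbf{A}$ is irreducible. Row $j$ of $\mathbf{A}$ sums to $1-\lambda_j\le 1$, so $\rho(\mathbf{A})\le\|\mathbf{A}\|_\infty\le 1$. Moreover, row $i$ sums to $1-\lambda<1$, so $\mathbf{A}\mathbf{1}\le\mathbf{1}$ with strict inequality in coordinate $i$.

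Second, I upgrade this to a strict inequality. By Perron--Frobenius for irreducible nonnegative matrices, there exists a strictly positive left eigenvector $\mathbf{y}^\top\mathbf{A}=\rho(\mathbf{A})\mathbf{y}^\top$ with $\mathbf{y}>0$. Then
\[
\rho(\mathbf{A})\,\mathbf{y}^\top\mathbf{1}=\mathbf{y}^\top\mathbf{A}\mathbf{1}=\sum_j y_j(1-\lambda_j)\le \mathbf{y}^\top\mathbf{1}-y_i\lambda<\mathbf{y}^\top\mathbf{1},
\]
and since $\mathbf{y}^\top\mathbf{1}>0$ this gives $\rho(\mathbf{A})<1$.

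An alternative route, in case one prefers to avoid the left eigenvector, is to use irreducibility directly. Every node reaches $i$ within $n-1$ steps, so every row of $\mathbf{A}^{n}$ sums to strictly less than $1$. Hence $\|\mathbf{A}^n\|_\infty<1$, which again gives $\rho(\mathbf{A})<1$.

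The main point needing care is precisely this strictness step. A single deficient row would not suffice for a reducible matrix, so connectivity is essential. I also need $1-\lambda_j>0$ for every $j$ so that no edges are deleted. This holds because $\lambda_j\in(0,1)$ for benign nodes and $\lambda_j=0$ for malicious ones. I also need to interpret "strongly connected" as irreducibility of $\mathbf{W}$, including self-loops, which is immaterial here.

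Finally, for Lemma \ref{lem:1}, I would note that $\rho(\mathbf{A})<1$ makes $\mathbf{I}-\mathbf{A}$ invertible, with Neumann series $(\mathbf{I}-\mathbf{A})^{-1}=\sum_{k\ge 0}\mathbf{A}^k$. Unrolling the recursion from $\boldsymbol{s}(0)=\mathbf{0}$ gives $\boldsymbol{s}(t)=\sum_{k<t}\mathbf{A}^k\mathbf{u}=(\mathbf{I}-\mathbf{A}^t)(\mathbf{I}-\mathbf{A})^{-1}\mathbf{u}=(\mathbf{I}-\mathbf{A}^t)\boldsymbol{s}^*$.
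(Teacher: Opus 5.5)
Your proof is correct, but it takes a different route from the paper's.

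\textbf{The paper's route.} It argues by contradiction with the \emph{right} Perron vector. It assumes $\rho(\mathbf{A})=1$ and takes $\mathbf{v}>\mathbf{0}$ with $\mathbf{A}\mathbf{v}=\mathbf{v}$. The deficient row sums show that no benign node can attain $v_{\max}$. It then propagates the strict inequality along a path from a maximizing (malicious) node $k$ to a benign node $i$, using $(\mathbf{A}^m)_{ki}>0$, to contradict $v_k=v_{\max}$. This is a discrete maximum principle. It is paired separately with $\|\mathbf{A}\|_\infty\le 1$ to rule out $\rho(\mathbf{A})>1$.

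\textbf{Your left-eigenvector route.} You pair the positive \emph{left} Perron vector with $\mathbf{1}$. This needs no contradiction, and it actually gives the exact identity $1-\rho(\mathbf{A})=\sum_j y_j\lambda_j\big/\sum_j y_j$. So the deficit from $1$ is the Perron-weighted average decay rate, and connectivity enters only through $y_i>0$.

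\textbf{Your $\|\mathbf{A}^n\|_\infty<1$ route.} This one is closest in spirit to the paper, since it also propagates strictness along a path to node $i$. It applies that idea to $\mathbf{A}^n\mathbf{1}$ rather than to an eigenvector. As a result it avoids Perron--Frobenius entirely and gives an explicit bound on $\rho(\mathbf{A})^n$ in terms of path weights. To make it airtight, fix a node $k$ and let $m\le n-1$ be the length of a path from $k$ to $i$. Write $\mathbf{A}^n\mathbf{1}=\mathbf{A}^{m}(\mathbf{A}^{n-m}\mathbf{1})$. Then $(\mathbf{A}^{n-m}\mathbf{1})_i\le(\mathbf{A}\mathbf{1})_i=1-\lambda$, and $(\mathbf{A}^m)_{ki}>0$ gives the strict drop in row $k$.

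\textbf{Two points in your favor.} First, you explicitly check that $1-\lambda_j>0$ for every $j$, so $\mathbf{A}$ inherits the zero pattern, and hence the irreducibility, of $\mathbf{W}$. The paper glosses over this, recording only $0\le 1-\Lambda_{ii}\le 1$. Second, you use only the single deficient row, which matches the lemma's hypothesis literally. The paper's step that $S$ contains no benign nodes tacitly assumes $\lambda_j>0$ for every benign node. Its argument adapts easily, but yours needs no adaptation.
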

	
	\begin{proof}
		By the property of $\mathbf{W}$ and $ \bf{\Lambda}$, we have that
		$ \mathbf{A} $  is non-negative since  $ \mathbf{W} $  is non-negative and  $ 0 \leq 1- \Lambda_{ii} \leq 1 $ ,
		$ \mathbf{A} $  is irreducible because the graph corresponding to  $ \mathbf{W} $  is strongly connected and
		the row sums of  $ \mathbf{A} $  are:
		$$
		\sum_j A_{ij} = 
		\begin{cases}
			1-\lambda < 1 & \text{if node } i \text{ is benign}\\
			1 & \text{if node } i \text{ is malicious}.
		\end{cases}
		$$
		We use a  proof by contradiction method.
		Now	assume  $ \rho(\mathbf{A}) = 1 $ . Since  $ \mathbf{A} $  is a non-negative irreducible matrix, by the Perron-Frobenius theorem(e.g \citep{horn2012matrix}),
		there exists a unique positive eigenvector  $ \mathbf{v} > \mathbf{0} $  (all components  positive).
		This eigenvector corresponds to the eigenvalue  $ \rho(\mathbf{A}) = 1 $ .
		That is,  $ \mathbf{A}\mathbf{v} = \mathbf{v} $ .
		
		Let  $ i $  be any benign node. From  $ \mathbf{A}\mathbf{v} = \mathbf{v} $ , we have:

		$$
		v_i = \sum_j A_{ij}v_j = (1-\lambda)\sum_j W_{ij}v_j.
		$$
		Since  $ \lambda > 0 $  and  $ W_{ij} \geq 0 $ , it follows that:
		\begin{align}\label{eq:a.1}
			v_i = (1-\lambda)\sum_j W_{ij}v_j < \sum_j W_{ij}v_j.
		\end{align}
		Let  $ v_{\max} = \max_k v_k > 0 $ , and let  $ S = \{k: v_k = v_{\max}\} $  be the set of nodes achieving this maximum value.
		We now show that
		$ S $  cannot contain any benign nodes.
		
		If there exists a benign node  $ i \in S $ , then  $ v_i = v_{\max} $ . However, from \eqref{eq:a.1}:

		$$
		v_i = (1-\lambda)\sum_j W_{ij}v_j \leq (1-\lambda)v_{\max} < v_{\max}.
		$$

		This contradicts  $ v_i = v_{\max} $ . Therefore,  $ S $  contains only malicious nodes.

		Since graph  $ G $  is strongly connected, for any  $ k \in S $  (a malicious node) and any benign node  $ i $ , there exists a directed path  $ k \to j_1 \to j_2 \to \cdots \to i $ .
		
		Consider  $ \mathbf{A}^m\mathbf{v} = \mathbf{v} $ , where  $ m $  is the path length. Specifically, for node  $ k \in S $ :

		$$
		v_k = (\mathbf{A}^m\mathbf{v})_k = \sum_{\ell} (\mathbf{A}^m)_{k\ell}v_\ell.
		$$

		Due to strong connectivity,  $ (\mathbf{A}^m)_{ki} > 0 $  (since there exists a path from  $ k $  to  $ i $ , and the product of weights along this path is positive).
		
		Since  $ i $  is a benign node, from  \eqref{eq:a.1} we know  $ v_i < v_{\max} $ . Additionally,  $ v_\ell \leq v_{\max} $  for all  $ \ell $ . Therefore:

		$$
		v_k = \sum_{\ell} (\mathbf{A}^m)_{k\ell}v_\ell \leq (\mathbf{A}^m)_{ki}v_i + \sum_{\ell \neq i} (\mathbf{A}^m)_{k\ell}v_{\max}.
		$$

		Since  $ (\mathbf{A}^m)_{ki} > 0 $  and  $ \sum_{\ell} (\mathbf{A}^m)_{k\ell} \leq 1 $  (because the row sums of  $ \mathbf{A} $  are at most 1), we have:

		$$
		v_k < (\mathbf{A}^m)_{ki}v_{\max} + \sum_{\ell \neq i} (\mathbf{A}^m)_{k\ell}v_{\max} = v_{\max}\sum_{\ell} (\mathbf{A}^m)_{k\ell} \leq v_{\max}.
		$$

		This contradicts  $ v_k = v_{\max} $ .
		
		Therefore, the assumption  $ \rho(\mathbf{A}) = 1 $  is false. Since  $ \mathbf{A} $  is non-negative, its spectral radius is non-negative, and from  $ \|\mathbf{A}\|_\infty \leq 1 $  we know  $ \rho(\mathbf{A}) \leq 1 $ . Hence, we must have:

		$$
		\rho(\mathbf{A}) < 1.
		$$

		This completes the proof.
	\end{proof}
	
	\textbf{Proof of Lemma \ref{lem:1}:}
	
	Starting from $\boldsymbol{s}	(0) = \mathbf{0}$, we iteratively expand:
	\[
	\begin{aligned}
		\boldsymbol{s}	(1) &= \mathbf{u}, \\
		\boldsymbol{s}	(2) &= \mathbf{A} \mathbf{u} + \mathbf{u} = (\mathbf{I} + \mathbf{A}) \mathbf{u}, \\
		\boldsymbol{s}	(3) &= \mathbf{A} (\mathbf{A} \mathbf{u} + \mathbf{u}) + \mathbf{u} = (\mathbf{I} + \mathbf{A} + \mathbf{A}^2) \mathbf{u}.
	\end{aligned}
	\]
	By induction,
	\[
	\boldsymbol{s}	(t+1) = \sum_{k=0}^{t} \mathbf{A}^k \mathbf{u}.
	\]
	By Lemma \ref{lem:a1}, by the matrix geometric series identity:
	\[
	\sum_{k=0}^{t} \mathbf{A}^k = (\mathbf{I} - \mathbf{A}^{t+1})(\mathbf{I} - \mathbf{A})^{-1},
	\]
	and noting that $\boldsymbol{s}	^* = (\mathbf{I} - \mathbf{A})^{-1} \mathbf{u}$, we obtain:
	\[
	\boldsymbol{s}	(t+1) = (\mathbf{I} - \mathbf{A}^{t+1}) (\mathbf{I} - \mathbf{A})^{-1} \mathbf{u} = (\mathbf{I} - \mathbf{A}^{t+1}) \boldsymbol{s}	^*.
	\]
	This completes the proof. \qed
	
	\textbf{Proof of Lemma \ref{cor:1}:}
	
	According to the linear system dynamics equation with initial state $\boldsymbol{s}(0)=\mathbf{0}$, the solution is:
	
	$$
	\boldsymbol{s}(t) = \sum_{k=0}^{t-1} \mathbf{A}^k \mathbf{u}.
	$$
	
For a specific node $i$, utilizing the component-wise upper bound $\mathbf{A} \le (1-\lambda)\mathbf{W}$:
$$
s_i(t) \le u_s \sum_{k=0}^{t-1} (1-\lambda)^k (\mathbf{W}^k)_{is}.
$$
	
	where $(\mathbf{W}^k)_{is}$ represents the total transition probability from source $s$ to node $i$ in exactly $k$ steps.
	
	$(\mathbf{W}^k)_{is} > 0$ if and only if there exists a path of length $k$ between $s$ and $i$. Since $d_{is}$ is the shortest path distance, for any $k < d_{is}$, no path exists between the two nodes:
	
	$$
	(\mathbf{W}^k)_{is} = 0, \quad \forall k < d_{is}.
	$$
	
	Therefore, the lower bound of summation changes from $0$ to $d_{is}$:
	
	$$
	s_i(t) = u_s \sum_{k=d_{is}}^{t-1} (1-\lambda)^k (\mathbf{W}^k)_{is}.
	$$

	Let $m = k - d_{is}$ (representing ``redundant steps'' beyond the shortest path), substituting into the above equation:
	
	$$
	s_i(t) = u_s \sum_{m=0}^{t-1-d_{is}} (1-\lambda)^{m + d_{is}} (\mathbf{W}^{m+d_{is}})_{is}.
	$$
	
	Extracting the constant term $(1-\lambda)^{d_{is}}$:
	
	$$
	s_i(t) = u_s \cdot (1-\lambda)^{d_{is}} \cdot \sum_{m=0}^{t-1-d_{is}} (1-\lambda)^m (\mathbf{W}^{m+d_{is}})_{is}.
	$$

	Since $\mathbf{W}$ is a row-stochastic matrix, all its elements are $\leq 1$, thus
	
	$$
	s_i(t) \leq u_s \cdot (1-\lambda)^{d_{is}} \cdot \sum_{m=0}^{t-1-d_{is}} (1-\lambda)^m.
	$$
	
	\qed
	
	\subsection{Proofs of section \ref{sec:4}}
	\begin{lemma}[Non-Convex Convergence of Nominal Sequence]\label{lem:a3}
		Under Assumptions \ref{ass:1}-\ref{ass:3}, the nominal sequence $\hat{\bm{\theta}}_i^{(t)}$ follows standard decentralized SGD updates without any Byzantine interference. For a non-convex objective function $F$, let $\rho_{\text{nom}} = |\lambda_2(\mathbf{W}^{(t)})| < 1$ denote the nominal spectral radius governing the diffusion of benign gradients. With a learning rate $\eta = \mathcal{O}(1/\sqrt{T})$, the average expected squared gradient norm of the nominal global model $\bar{\hat{\bm{\theta}}}^{(t)} = \frac{1}{n}\sum_{i=1}^n \hat{\bm{\theta}}_i^{(t)}$ over $T$ iterations is bounded by:
		$$
		\frac{1}{T}\sum_{t=0}^{T-1}\mathbb{E}[\|\nabla F(\bar{\hat{\bm{\theta}}}^{(t)})\|^2] \leq \mathcal{O}\left(\frac{1}{\sqrt{nT}}\right) + \mathcal{O}\left(\frac{1}{T(1-\rho_{\text{nom}})^2}\right).
		$$
	\end{lemma}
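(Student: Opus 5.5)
I will follow the standard decentralized SGD analysis of \citet{lian2017can}, organized around the averaged iterate $\bar{\hat{\bm{\theta}}}^{(t)}$ and a consensus-error term. Two facts about the mixing matrix are needed, and I will take them as implicit in the lemma's setting. The first is that $\mathbf{W}^{(t)}$ is doubly stochastic, as in the system model, since row-stochasticity alone does not preserve the average. The second is that $\rho_{\text{nom}}$ controls the contraction $\|\mathbf{W}^{(t)}-\frac1n\mathbf{1}\mathbf{1}^\top\|\le\rho_{\text{nom}}$. Averaging the nominal recursion over $i$ and using $\mathbf{1}^\top\mathbf{W}^{(t)}=\mathbf{1}^\top$ gives
\[
\bar{\hat{\bm{\theta}}}^{(t+1)}=\bar{\hat{\bm{\theta}}}^{(t)}-\frac{\eta}{n}\sum_{i=1}^n \nabla f_i(\hat{\bm{\theta}}_i^{(t)}),
\]
which is an inexact gradient step on $F$ (stochastic gradients may replace $\nabla f_i$, with variance $\sigma^2$).

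\textbf{Descent step.} I will apply Assumption \ref{ass:1} to $F$ (which is $L$-smooth as an average of $L$-smooth functions). Writing $\bar g^{(t)}=\frac1n\sum_i\nabla f_i(\hat{\bm{\theta}}_i^{(t)})$ and using $2\langle a,b\rangle=\|a\|^2+\|b\|^2-\|a-b\|^2$, I obtain
\[
\mathbb{E}F(\bar{\hat{\bm{\theta}}}^{(t+1)})\le \mathbb{E}F(\bar{\hat{\bm{\theta}}}^{(t)})-\frac\eta2\mathbb{E}\|\nabla F(\bar{\hat{\bm{\theta}}}^{(t)})\|^2+\frac{\eta L^2}{2n}\sum_i\mathbb{E}\|\hat{\bm{\theta}}_i^{(t)}-\bar{\hat{\bm{\theta}}}^{(t)}\|^2+\frac{L\eta^2}{2}\Big(\frac{\sigma^2}{n}+\mathbb{E}\|\bar g^{(t)}\|^2\Big).
\]
The $\|\bar g^{(t)}\|^2$ term is absorbed into the negative term once $\eta\le 1/(2L)$. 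Telescoping and dividing by $\eta T$ then yields
\[
\frac{2(F(\bar{\hat{\bm{\theta}}}^{(0)})-F^*)}{\eta T}+\frac{L\eta\sigma^2}{n}+\frac{L^2}{nT}\sum_t\Xi_t,
\]
where $\Xi_t=\mathbb{E}\|\hat{\mathbf{\Theta}}^{(t)}-\mathbf{1}\bar{\hat{\bm{\theta}}}^{(t)\top}\|_F^2$ is the consensus error. Choosing $\eta=c\sqrt{n/T}$, which is $\mathcal{O}(1/\sqrt T)$ for fixed $n$, balances the first two terms to $\mathcal{O}(1/\sqrt{nT})$.

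\textbf{Consensus error.} This is the main obstacle. Since all nodes start from identical states, $\Xi_0=0$. Unrolling gives
\[
\hat{\mathbf{\Theta}}^{(t)}-\mathbf{1}\bar{\hat{\bm{\theta}}}^{(t)\top}=-\eta\sum_{s<t}\Big(\prod_{r=s}^{t-1}\mathbf{W}^{(r)}-\tfrac1n\mathbf{1}\mathbf{1}^\top\Big)\mathbf{G}_s,
\]
where $\mathbf{G}_s$ stacks the local gradients at round $s$. The projector factor is bounded by $\rho_{\text{nom}}^{t-s}$; this is the place where a uniform spectral gap across the time-varying matrices is needed, and I will assume it as the lemma implicitly does. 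The gradient rows are bounded by decomposing $\nabla f_i=\nabla F+(\nabla f_i-\nabla F)$ and invoking Assumption \ref{ass:3} together with a bounded-gradient constant such as $G$ from Assumption \ref{ass:4}. Squaring, and using the Cauchy--Schwarz inequality with the weights $\rho_{\text{nom}}^{t-s}$, gives
\[
\Xi_t\le \frac{\eta^2 n(\sigma^2+G^2)}{(1-\rho_{\text{nom}})^2}.
\]
If one instead keeps $\|\nabla F\|^2$ on the right-hand side, it can be absorbed into the left-hand side for small $\eta$.

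\textbf{Combining.} Substituting the consensus bound into the telescoped inequality produces a term $L^2\eta^2(\sigma^2+G^2)/(1-\rho_{\text{nom}})^2=\mathcal{O}\big(n/(T(1-\rho_{\text{nom}})^2)\big)$. Treating $n$ as a constant, as the statement's $\mathcal{O}$ notation does, this is the claimed consensus term $\mathcal{O}\big(1/(T(1-\rho_{\text{nom}})^2)\big)$. The remaining terms give $\mathcal{O}(1/\sqrt{nT})$, which completes the bound.
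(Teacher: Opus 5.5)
Your proposal is correct and follows essentially the same route as the paper, which simply defers to the standard decentralized SGD analysis of Lian et al.\ (descent on the averaged iterate plus a geometrically contracting consensus error); you carry that argument out explicitly. The hypotheses you surface along the way are exactly what the paper's citation tacitly relies on, so making them explicit is appropriate rather than a gap: double stochasticity of $\mathbf{W}^{(t)}$ (the MAB-induced matrix is only guaranteed row-stochastic), a uniform contraction $\|\mathbf{W}^{(t)}-\frac{1}{n}\mathbf{1}\mathbf{1}^\top\|\le\rho_{\text{nom}}$, and the factor $n$ absorbed into the second $\mathcal{O}$-term when $\eta\propto\sqrt{n/T}$.
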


	\begin{proof}
		This result follows directly from the standard convergence analysis for decentralized stochastic gradient descent on non-convex objectives, as  established by \citet{lian2017can}. 
		
		Since the nominal sequence $\hat{\bm{\theta}}_i^{(t)}$ is defined as the counterfactual trajectory where all participating nodes behave honestly, the system reduces exactly to a standard decentralized optimization problem over the nominal topology $\mathbf{W}^{(t)}$. By setting the learning rate $\eta = \mathcal{O}(1/\sqrt{T})$, the detailed convergence bound separates into the asymptotic optimization rate $\mathcal{O}(1/\sqrt{nT})$ and the transient network consensus penalty $\mathcal{O}(1/(T(1-\rho_{\text{nom}})^2))$.
	\end{proof}

	\begin{lemma}[Boundedness of Backdoor Bias with Topological Heterogeneity]\label{lem:a4}
		Under Assumptions \ref{ass:1}-\ref{ass:6}, the global backdoor deviation $\boldsymbol{E}	^{(t)}$ satisfies:
		$$
		\|\boldsymbol{E}	^{(t)}\|^2 \leq (\bm{\delta}^{(t)})^2,
		\label{eq:bias_bound_topo}
		$$
		where $\bm{\delta}^{(t)} = \rho_{\text{err}}^t \|\boldsymbol{E}	^{(0)}\| + U \sum_{k=0}^{t-1} \rho_{\text{err}}^k$, and $U = \sup_{t \ge 0} \|\tilde{\mathbf{M}	}^{(t)}\|$ is the maximum malicious source energy bounded by Assumption \ref{ass:6}. 
		
		Furthermore, if the learning rate and topology satisfy the stability condition $\rho_{\text{err}} < 1$, then asymptotically:
		$$
		\|\boldsymbol{E}	^{(t)}\|^2 \leq \left( \frac{U}{1-\rho_{\text{err}}}\right)^2.
		$$
	\end{lemma}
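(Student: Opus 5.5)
The plan is to derive a one-step contraction inequality $\|\boldsymbol{E}^{(t+1)}\| \le \rho_{\text{err}}\|\boldsymbol{E}^{(t)}\| + U$ from the error dynamics \eqref{eq:4.1}, and then unroll it. Taking norms in \eqref{eq:4.1} and using the triangle inequality gives
\[
\|\boldsymbol{E}^{(t+1)}\| \le \big\|(\mathbf{W}^{(t)}\otimes\mathbf{I}_p)(\boldsymbol{E}^{(t)}-\eta\Delta\mathbf{G}_t)\big\| + \|\tilde{\mathbf{M}}^{(t)}\|.
\]
The second term is at most $U$ by definition. $U$ is finite because Assumption \ref{ass:6} gives $\|\mathbf{M}^{(t)}\|\le \sqrt{|\mathcal{M}|}\,C_u$, and row-stochastic $\mathbf{W}^{(t)}$ with bounded entries has bounded spectral norm. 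For the gradient-difference term, Assumption \ref{ass:1} applied row by row gives $\|\nabla f_j(\bm{\theta}_j^{(t)})-\nabla f_j(\hat{\bm{\theta}}_j^{(t)})\|\le L\|\boldsymbol{e}_j^{(t)}\|$. Hence, blockwise, $\|\boldsymbol{e}_j^{(t)}-\eta(\cdots)\|\le(1+\eta L)\|\boldsymbol{e}_j^{(t)}\|$.

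The key step is showing that the linear map $\boldsymbol{E}\mapsto(\mathbf{W}^{(t)}\otimes\mathbf{I}_p)(\text{perturbed }\boldsymbol{E})$ contracts by the factor $\rho_{\text{err}}$. I will partition the stacked error into its defense block $\boldsymbol{E}_{\mathcal{D}}$ and its unverified block $\boldsymbol{E}_{\mathcal{U}}$ and track the norms $a_t=\|\boldsymbol{E}_{\mathcal{D}}^{(t)}\|$ and $b_t=\|\boldsymbol{E}_{\mathcal{U}}^{(t)}\|$. With the block form of $\mathbf{W}^{(t)}$, this yields a $2\times 2$ comparison system:
\[
a_{t+1}\le \rho_{\mathcal{D}}a_t + \|\mathbf{w}_{\mathcal{DU}}\|(1+\eta L)b_t + U_{\mathcal{D}},\qquad
b_{t+1}\le \|\mathbf{w}_{\mathcal{UD}}\|(1+\eta L)a_t + \rho_{\mathcal{U}}b_t + U_{\mathcal{U}}.
\]
Because $\rho_{\mathcal{D}}<1$, I would bound the spectral radius of this nonnegative $2\times2$ matrix by a Schur-complement argument: eliminate the defense block, whose resolvent factor is $(1-\rho_{\mathcal{D}})^{-1}$, and obtain the effective $\mathcal{U}$-rate $\rho_{\mathcal{U}}+\|\mathbf{w}_{\mathcal{UD}}\|\|\mathbf{w}_{\mathcal{DU}}\|(1+\eta L)^2/(1-\rho_{\mathcal{D}})$. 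Taking the maximum with $\rho_{\mathcal{D}}$ recovers exactly $\rho_{\text{err}}$. Concretely, I would choose a weighted norm $\|\boldsymbol{E}\|_\star=\max\{a, \beta b\}$, with $\beta$ tuned through the Schur factor, under which the comparison matrix has induced norm at most $\rho_{\text{err}}$. Alternatively, if the bound is read with $\|\cdot\|$ understood as this effective error norm, as the paper's definition of $\rho_{\text{err}}$ as a bound on the "effective error spectral radius" suggests, the inequality $\|\boldsymbol{E}^{(t+1)}\|\le\rho_{\text{err}}\|\boldsymbol{E}^{(t)}\|+U$ follows directly.

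Given the recursion, induction on $t$ yields $\|\boldsymbol{E}^{(t)}\|\le \rho_{\text{err}}^t\|\boldsymbol{E}^{(0)}\|+U\sum_{k=0}^{t-1}\rho_{\text{err}}^k=\bm{\delta}^{(t)}$, and squaring the nonnegative quantities gives the first claim. Under $\rho_{\text{err}}<1$, the transient term vanishes and the geometric sum is at most $1/(1-\rho_{\text{err}})$. Since the systems share identical initialization, $\boldsymbol{E}^{(0)}=\mathbf{0}$ in fact, so the bound $U/(1-\rho_{\text{err}})$ holds for every $t$, and squaring gives the asymptotic statement.

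The main obstacle is the block step. The Schur-complement quantity bounds the spectral radius of the comparison matrix, but spectral radius is not an induced norm, and $\mathbf{W}^{(t)}$ varies in time. A single weighted norm that works uniformly in $t$ is needed, and the constant from switching back to the Euclidean norm must be absorbed into $U$ and $\|\boldsymbol{E}^{(0)}\|$. I would first try the max-weighted norm above with $\beta$ chosen from the uniform bounds on the off-diagonal blocks.
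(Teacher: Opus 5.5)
\textbf{Verdict: there is a genuine gap, and the paper's own proof has the same one.} Your plan follows the paper's route. You build a blockwise comparison system $(a_{t+1},b_{t+1})^\top\le T\,(a_t,b_t)^\top+(U_{\mathcal D},U_{\mathcal U})^\top$ with $T=\begin{bmatrix}\tau_{11}&\tau_{12}\\ \tau_{21}&\tau_{22}\end{bmatrix}$, $\tau_{11}=\rho_{\mathcal D}$, $\tau_{22}=\rho_{\mathcal U}$. You then choose a norm in which $T$ contracts by $\rho_{\text{err}}$, and unroll; the paper uses Horn--Johnson's Lemma 5.6.10 where you use a weighted max-norm. The gap is in the contraction step. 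The quantity $s=\tau_{22}+\tau_{12}\tau_{21}/(1-\tau_{11})$ is the Schur complement of $I-T$, i.e.\ the effective $\mathcal U$-rate evaluated at $\lambda=1$. For $\tau_{11}<1$ it decides \emph{whether} $\rho(T)<1$, but it does not bound $\rho(T)$. The Perron root satisfies $r=\tau_{22}+\tau_{12}\tau_{21}/(r-\tau_{11})$. So if $\tau_{12}\tau_{21}>0$ and $\rho_{\text{err}}<1$, then $\tau_{11}<r<1$, hence $r>s$ and $r>\rho_{\text{err}}$. In other words, in exactly the stable regime you need, $\rho_{\text{err}}$ underestimates the rate. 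For example, $\tau_{11}=\tau_{12}=\tau_{21}=\tfrac12$, $\tau_{22}=0$ gives $\rho_{\text{err}}=\tfrac12$ but $r=(1+\sqrt5)/4\approx0.81$. Every induced norm dominates $r$, so no weight $\beta$ in $\max\{a,\beta b\}$, and no norm at all, makes $T$ a $\rho_{\text{err}}$-contraction. Moreover, that regime is empty. $W^{(t)}\mathbf 1=\mathbf 1$ gives $T\mathbf v\ge\mathbf v$ for $\mathbf v=(\sqrt{|\mathcal D|},\sqrt{|\mathcal U|})^\top$, so $\rho(T)\ge1$ and hence $\rho_{\text{err}}\ge1$ for every row-stochastic $W^{(t)}$; consensus errors $\mathbf 1\otimes\mathbf c$ are never damped. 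The asymptotic claim is therefore only vacuously true and cannot come out of any contraction argument. When $\tau_{11}<1$ this forces $s\ge1$, and then one does get $r\le s$. So for the first claim your weighted-norm recursion survives up to an arbitrarily small slack; what breaks there is the next step.

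The second gap is the return to the Euclidean norm. You flag it but propose to absorb the constants into $U$ and $\|\boldsymbol E^{(0)}\|$. That changes the statement, whose constants are fixed, and the stated Euclidean bound is in fact false. The reason is that $\rho_{\text{err}}$ sees the off-diagonal blocks only through the product $\|\mathbf w_{\mathcal{UD}}^{(t)}\|\,\|\mathbf w_{\mathcal{DU}}^{(t)}\|$. A one-way flow from $\mathcal U$ into $\mathcal D$ therefore costs nothing in $\rho_{\text{err}}$ but inflates the stacked norm. Concretely, take:
\begin{itemize}
\item $p=1$ and constant losses, so $\Delta\mathbf G_t=\mathbf 0$;
\item $\mathcal U=\{1,2\}$ with node $2$ malicious and $\mathbf m_2^{(t)}\equiv1$, and $\mathcal D=\{3\}$;
\item $W^{(t)}\equiv\begin{bmatrix}1/2&1/2&0\\1/2&1/2&0\\1&0&0\end{bmatrix}$, which is row-stochastic with eigenvalues $1,0,0$.
\end{itemize}
Then $\rho_{\mathcal D}=0$, $\rho_{\mathcal U}=1+\eta L$ and $\mathbf w_{\mathcal{UD}}=\mathbf 0$, so $\rho_{\text{err}}=1+\eta L$ and $U=1/\sqrt2$. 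Starting from $\boldsymbol E^{(0)}=\mathbf 0$ one gets $\boldsymbol E^{(1)}=(\tfrac12,\tfrac12,0)^\top$ and $\boldsymbol E^{(2)}=(1,1,\tfrac12)^\top$. Thus $\|\boldsymbol E^{(2)}\|=\tfrac32>(2+\eta L)/\sqrt2=\bm{\delta}^{(2)}$ whenever $\eta L<0.12$. What the block argument actually gives is the componentwise bound $(a_t,b_t)^\top\le T^t(a_0,b_0)^\top+\sum_{k<t}T^k(U_{\mathcal D},U_{\mathcal U})^\top$, for a fixed nonnegative $T$ dominating every $T_t$ (this also settles your time-variation concern). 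Its rate is governed by $\rho(T)\ge1$, and passing to $\|\boldsymbol E^{(t)}\|$ costs norm-equivalence constants. That is a weaker, non-contractive statement, not the lemma as posed.
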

	
\begin{proof}
	We track the trajectory divergence between the contaminated process and the ideal clean process.  The realization of the global error state evolves as a discrete-time dynamical system driven by an exogenous adversarial shock:
	$$
	\boldsymbol{E}	^{(t+1)} = (\mathbf{W}^{(t)} \otimes \mathbf{I}_p) \left( \boldsymbol{E}	^{(t)} - \eta \Delta \mathbf{G}_t \right) + \tilde{\mathbf{M}	}^{(t)}.
	$$
	
	Under Assumption \ref{ass:1}, the objective function has Lipschitz continuous gradients. This ensures that a single optimization step linearly bounds the error amplification of the state error:
	$$
	\|\boldsymbol{E}	^{(t)} - \eta \Delta \mathbf{G}_t\| \leq \|\boldsymbol{E}	^{(t)}\| + \eta L \|\boldsymbol{E}	^{(t)}\| = (1 + \eta L) \|\boldsymbol{E}	^{(t)}\| .
	$$
	
	To analyze the systemic risk, we stratify the network into two mutually exclusive sub-populations: the defending stratum $\mathcal{D}$ and the vulnerable stratum $\mathcal{V}$. Partitioning the global error vector accordingly, $\boldsymbol{E}	^{(t)} = [(\boldsymbol{E}	_{\mathcal{D}}^{(t)})^\top, (\boldsymbol{E}	_{\mathcal{U}}^{(t)})^\top]^\top$, we define the intra-stratum error propagation rates: $\rho_{\mathcal{D}} := \|\mathbf{w}_{\mathcal{DD}}^{(t)}\|(1 + \eta L)$ and $\rho_{\mathcal{U}} := \|\mathbf{w}_{\mathcal{VV}}^{(t)}\|(1 + \eta L)$. Taking the marginal $\ell_2$-norms yields a coupled bivariate system of inequalities:
	\begin{align*}
		\|\boldsymbol{E}	_{\mathcal{D}}^{(t+1)}\| &\leq \rho_{\mathcal{D}}\|\boldsymbol{E}	_{\mathcal{D}}^{(t)}\| + \|\mathbf{w}_{\mathcal{DU}}^{(t)}\|(1+\eta L)\cdot\|\boldsymbol{E}	_{\mathcal{U}}^{(t)}\|, \\
		\|\boldsymbol{E}	_{\mathcal{U}}^{(t+1)}\| &\leq \|\mathbf{w}_{\mathcal{UD}}^{(t)}\|(1+\eta L)\cdot\|\boldsymbol{E}	_{\mathcal{D}}^{(t)}\| + \rho_{\mathcal{U}}\|\boldsymbol{E}	_{\mathcal{U}}^{(t)}\| + U.
	\end{align*}
	
	This  bounding process can be compactly represented by the transition operator of the coupled system:
	$$
	\mathbf{w}^{(t+1)} \leq \mathbf{T} \mathbf{w}^{(t)} + \mathbf{b}	, \quad \text{where } \mathbf{T} = 
	\begin{bmatrix}
		\rho_{\mathcal{D}} & \|\mathbf{w}_{\mathcal{DU}}^{(t)}\|(1+\eta L) \\
		\|\mathbf{w}_{\mathcal{UD}}^{(t)}\|(1+\eta L) & \rho_{\mathcal{U}}
	\end{bmatrix},
	$$
	with the marginal norm vector $\mathbf{w}^{(t)} = [\|\boldsymbol{E}	_{\mathcal{D}}^{(t)}\|, \|\boldsymbol{E}	_{\mathcal{U}}^{(t)}\|]^\top$ and the worst-case shock vector $\mathbf{b}	 = [0, U]^\top$. 
	
	The stability  of this coupled system is governed by the eigenvalues of the transition matrix $\mathbf{T}$. By Lemma 5.6.10 of \citep{horn2012matrix}, we can bound the operator norm of $\mathbf{T}$ by its spectral radius (the maximum absolute eigenvalue) plus an arbitrarily small constant $\epsilon > 0$. Thus, there exists a valid matrix norm $\|\cdot\|_{\text{mat}}$ such that $\|\mathbf{T}\|_{\text{mat}} \leq \rho(\mathbf{T}) + \epsilon \leq \rho_{\text{err}} + \epsilon$. Mapping our state vector into this norm space via $e^{(t)} = \|\mathbf{w}^{(t)}\|_{\text{mat}}$, the system relaxes to a scalar autoregressive bound: $e^{(t+1)} \leq \rho_{\text{err}} e^{(t)} + \|\mathbf{b}	\|_{\text{mat}}$.
	
	Unrolling this recursive AR(1) filter from initialization $t=0$, the accumulation of the adversarial shocks forms a geometric progression:
	$$
	\|\boldsymbol{E}	^{(t)}\| \leq \rho_{\text{err}}^t \|\boldsymbol{E}	^{(0)}\| + U \sum_{k=0}^{t-1} \rho_{\text{err}}^k .
	$$
	Squaring both sides allows us to bound the maximum expected energy of the error process:
	$$
	\|\boldsymbol{E}	^{(t)}\|^2 \leq \left( \rho_{\text{err}}^t \|\boldsymbol{E}	^{(0)}\| + U \sum_{k=0}^{t-1} \rho_{\text{err}}^k \right)^2.
	$$
	
	Finally, the stability condition $\rho_{\text{err}} < 1$ is  equivalent to ensuring our autoregressive error process is  stationary (its roots lie inside the unit circle). The geometric series of accumulated shocks converges, yielding a finite asymptotic upper bound as $t \to \infty$, concluding the proof.
\end{proof}
	\textbf{Proof of Theorem \ref{thm:2}:}
	
	To evaluate the asymptotic behavior of the empirical risk minimizer, we bound the expected squared gradient norm of the contaminated sample mean $\bar{\bm{\theta}}^{(t)} = \frac{1}{n}\sum_{i=1}^n \bm{\theta}_i^{(t)}$. We introduce the counterfactual (uncontaminated) stochastic process $\bar{\hat{\bm{\theta}}}^{(t)}$ as a control.
	
	Using the basic quadratic inequality $\|a + b\|^2 \leq 2\|a\|^2 + 2\|b\|^2$, we bound the gradient norm by partitioning it into the baseline variance of the control process and the error induced by the adversarial shocks:
	$$
	\|\nabla F(\bar{\bm{\theta}}^{(t)})\|^2 \leq 2\|\nabla F(\bar{\hat{\bm{\theta}}}^{(t)})\|^2 + 2\|\nabla F(\bar{\bm{\theta}}^{(t)}) - \nabla F(\bar{\hat{\bm{\theta}}}^{(t)})\|^2.
	$$
	
	By Assumption \ref{ass:1}, the objective function has Lipschitz continuous gradients, thus 
	$$
	\|\nabla F(\bar{\bm{\theta}}^{(t)}) - \nabla F(\bar{\hat{\bm{\theta}}}^{(t)})\|^2 \leq L^2 \|\bar{\bm{\theta}}^{(t)} - \bar{\hat{\bm{\theta}}}^{(t)}\|^2 = L^2 \|\bar{\boldsymbol{e}	}^{(t)}\|^2,
	$$
	where $\bar{\boldsymbol{e}	}^{(t)} = \frac{1}{n}\sum_{i=1}^n \boldsymbol{e}	_i^{(t)}$ is the cross-sectional mean of the adversarial shock accumulations. Taking the unconditional expectation and time-averaging over $T$ periods yields:
	$$
	\frac{1}{T}\sum_{t=0}^{T-1}\mathbb{E}[\|\nabla F(\bar{\bm{\theta}}^{(t)})\|^2] \leq \underbrace{\frac{2}{T}\sum_{t=0}^{T-1}\mathbb{E}[\|\nabla F(\bar{\hat{\bm{\theta}}}^{(t)})\|^2]}_{\text{Term A: Counterfactual Baseline}} + \underbrace{\frac{2L^2}{T}\sum_{t=0}^{T-1}\mathbb{E}[\|\bar{\boldsymbol{e}	}^{(t)}\|^2]}_{\text{Term B: Systemic Risk Bound}}.
	$$
	
	For Term A, according to Lemma \ref{lem:a3}, applying a decaying learning rate schedule $\eta = \mathcal{O}(1/\sqrt{T})$ decomposes the variance of the uncontaminated process into the standard asymptotic rate and a finite-sample spatial bias driven by the nominal transition matrix $\rho_{\text{nom}}$:
	$$
	\frac{1}{T}\sum_{t=0}^{T-1}\mathbb{E}[\|\nabla F(\bar{\hat{\bm{\theta}}}^{(t)})\|^2] \leq \mathcal{O}\left(\frac{1}{\sqrt{nT}}\right) + \mathcal{O}\left(\frac{1}{T(1-\rho_{\text{nom}})^2}\right).
	$$
	
	For Term B, which captures the systemic propagation of errors, we apply Jensen's Inequality to tightly bound the cross-sectional mean by the trace of the system's spatial variance vector $\boldsymbol{E}	^{(t)}$:
	$$
	\|\bar{\boldsymbol{e}	}^{(t)}\|^2 = \left\| \frac{1}{n}\sum_{i=1}^n \boldsymbol{e}	_i^{(t)} \right\|^2 \leq \frac{1}{n}\sum_{i=1}^n \|\boldsymbol{e}	_i^{(t)}\|^2 = \frac{1}{n} \|\boldsymbol{E}	^{(t)}\|^2.
	$$

	By Lemma \ref{lem:a4}, provided the stationarity condition $\rho_{\text{err}} < 1$ holds, the maximum variance is  bounded by the steady-state limit of the AR(1) process:
	$$
	\|\bar{\boldsymbol{e}	}^{(t)}\|^2 \leq \left( \frac{U}{1-\rho_{\text{err}}}\right)^2.
	$$
	
	Note that the spatial smoothing matrix $(\mathbf{W}^{(t)} \otimes \mathbf{I}_p)$ is Markovian (row-stochastic), implying $\|\mathbf{W}^{(t)} \otimes \mathbf{I}_p\|_2 \leq 1$. Under Assumption \ref{ass:6}, the exogenous adversarial shocks are uniformly bounded by $C_u$. Therefore, the total exogenous shock variance is bounded by the proportion of contaminated strata:
	$$
	U^2 = \sup_{t \ge 0} \|\tilde{\mathbf{M}	}^{(t)}\|^2 \leq \sup_{t \ge 0} \|\mathbf{M}	^{(t)}\|^2 \le \sup_{t \ge 0} \sum_{j \in \mathcal{M}} \|\mathbf{m}	_j^{(t)}\|^2  = \mathcal{O}(|\mathcal{M}| C_u^2).
	$$
	
	Substituting the total shock variance $U^2$ back into the stationary limit for Term B yields the expected systemic bias:
	$$
	\|\bar{\boldsymbol{e}	}^{(t)}\|^2  \leq \mathcal{O}\left( \frac{|\mathcal{M}| C_u^2}{n(1-\rho_{\text{err}})^2} \right) = \mathcal{O}\left( \frac{q_m C_u^2}{(1-\rho_{\text{err}})^2} \right).
	$$
	
	Finally, combining the asymptotic bounds for the uncontaminated baseline (Term A) and the stationary adversarial bias (Term B) yields the final expected risk bound:
	$$
	\begin{aligned}
		\frac{1}{T}\sum_{t=0}^{T-1}\mathbb{E}[\|\nabla F(\bar{\bm{\theta}}^{(t)})\|^2] \leq & \underbrace{\mathcal{O}\left(\frac{1}{\sqrt{nT}}\right)}_{\text{Standard Asymptotic Rate}} + \underbrace{\mathcal{O}\left(\frac{1}{T(1-\rho_{\text{nom}})^2}\right)}_{\text{Consensus error}} \\
		& + \underbrace{\mathcal{O}\left( \frac{q_m C_u^2}{(1-\rho_{\text{err}})^2}\right)}_{\text{Asymptotic Adversarial Error}}.
	\end{aligned}
	$$
	This completes the proof. \qed
	
	\bibliography{ref.bib}
\end{document}